\newtheorem{theorem}{Theorem}
\newtheorem{lemma}{Lemma}
\newtheorem{proposition}{Proposition}
\theoremstyle{definition}
\newtheorem{definition}{Definition}
\newcommand{\real}{\mathbb{R}}
\newcommand{\ball}{\mathbb{B}}
\newcommand{\expect}{\mathbb{E}}
\DeclareMathOperator*{\argmax}{arg\,max}
\DeclareMathOperator*{\argmin}{arg\,min}
\begin{document}
\title{Deterministic Certification to Adversarial Attacks via\\ Bernstein Polynomial Approximation}
\author{Ching-Chia Kao\thanks{Equal contribution}$^{1}$, Jhe-Bang Ko\footnotemark[1]$^{1}$, Chun-Shien Lu$^{1}$\\}
\affiliations{
    $^{1}$Institute of Information Science, Academia Sinica, Taiwan\\
    \{cck123, b05505053, lcs\}@iis.sinica.edu.tw\\
}

\maketitle

\begin{abstract}
    \begin{quote}
        \emph{Randomized smoothing} has established state-of-the-art provable robustness against $\ell_2$ norm adversarial attacks with high probability. However, the introduced Gaussian data augmentation causes a severe decrease in natural accuracy. We come up with a question, ``Is it possible to construct a smoothed classifier without randomization while maintaining natural accuracy?''. We find the answer is definitely yes. We study how to transform any classifier into a certified robust classifier based on a popular and elegant mathematical tool, Bernstein polynomial. Our method provides a deterministic algorithm for decision boundary smoothing. We also introduce a distinctive approach of norm-independent certified robustness via numerical solutions of nonlinear systems of equations. Theoretical analyses and experimental results indicate that our method is promising for classifier smoothing and robustness certification.
    \end{quote}
\end{abstract}

\section{Introduction}
Neural Network models achieve an enormous breakthrough in image classification these years but are vulnerable to imperceptible perturbations known as adversarial attacks, as evidenced by  \cite{GoodfellowSS14} \cite{Szegedy13}. Since then, many researchers began to develop their adversarial attacks such as DeepFool \cite{moosavi2016deepfool}, Jacobian based Saliency Map Attack (JSMA) \cite{papernot2016limitations}, CW attack \cite{carlini2017towards}, etc. On the other hand, researchers also attempted to build defense mechanisms that are invincible to adversarial attacks. Please see \cite{yuan2019adversarial} for the thorough surveys of adversarial attacks and defense approaches. However, we still have a long way to go in that making an indestructible machine learning model is still an open problem in the AI community.

Empirical defense strategies can be categorized into several groups. The first one is adversarial detection \cite{feinman2017detecting} \cite{ma2018characterizing}. However, \cite{carlini2017adversarial} bypassed ten detection methods and showed that it is extremely challenging to detect adversarial attacks. The second one is denoising \cite{liao2018defense} \cite{xie2019feature} \cite{xu2017feature}. The kind of methods is computationally inefficient (some works trained on hundreds of GPUs) but useful in practice. The third one is called adversarial training, which is the most effective but suffers from attack dependency \cite{tramer2018ensemble} \cite{madry2018towards}. In addition to the aforementioned methods, there are still various defense strategies proposed. \cite{pmlr-v80-athalye18a} showed that most of the defense methods are ineffective when encountering sophisticated designed attacks due to lack of a provable defense guarantee. 
\begin{figure}[!t]
  \centering
    \includegraphics[width=.4\textwidth]{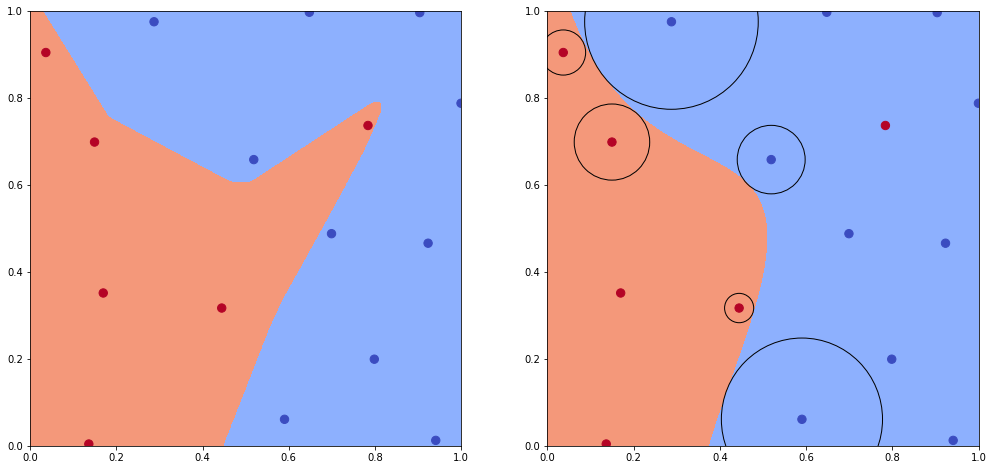}
    \caption{A 2D example to visualize our result. {\bf Left:}           
     The classifier's decision boundary from the neural network is very ``sharp". Points
     near the sharp region are vulnerable to adversarial attacks. {\bf Right:} We can see 
     that the boundary of the classifier from our method is smoothed, and the 
     certification (``safe zone") is mostly accurate.}
    \label{fig:compare}
\end{figure}
To avoid this arms race, a series of researches \cite{salman2019convex} on provable defenses or certified robustness have emerged with theoretical guarantees. Specifically, for any point $x$, there is a set containing the provably robust $x$, implying that every point in this set gives the same prediction.

Assume that adversarial examples are caused due to the irregularity of the decision boundary. \emph{Randomized smoothing} \cite{cohen2019certified} introduced Gaussian random noise for smoothing the base classifier. Nonetheless, many challenges remain. Firstly, Gaussian data augmentation causes a severe decrease in natural accuracy. Secondly, the prediction and certification require numerous samples during inference, which is a time-consuming process. Thirdly, there is still a chance, even slightly, that the certification is inaccurate. In many applications, like self-driving vehicles, we do not want to take this kind of risk. 

In this paper, we propose a new method to ``smooth" the decision boundary of the classifier. Our method can also provide a ``safe zone" that predicts the same class as the input deterministically. We use a 2D example to demonstrate our method in Figure~\ref{fig:compare}. Note that the regions near the decision boundary are more comfortable to certify by our approach, while the areas far from the decision boundary are less accurate. It is due to the initial guess of the solver of nonlinear systems of equations. To overcome this problem, we set the error tolerance of the solution to an acceptable threshold \cite{2020SciPy-NMeth}. Furthermore, we care more about data near the decision boundary than those far away from it since we can find adversarial examples with our bare eyes if the attack is strong enough.

Our contributions are summarized as follows:
\begin{itemize}
    \item To our knowledge, we are the first to introduce Bernstein polynomial into the adversarial example community for which we devise a deterministic algorithm to smooth the base classifier. 
    \item Our method, compared with \cite{cohen2019certified} and \cite{salman2019provably}, can maintain higher natural accuracy while achieving comparable robustness (certified accuracy) in CIFAR-10\footnote{Other results obtained from different datasets such as MNIST, SVHN, and CIFAR-100 would be discussed in the Appendix} \cite{krizhevsky2009learning}.
    \item We can certify a smoothed classifier with arbitrary norms (norm independent).
    \item Our method can be used in different aspects of machine learning beyond certified robustness, such as over-fitting alleviation. 
\end{itemize}

\section{Related Works}
We roughly categorize adversarial defense researches into empirical defenses and certified defenses. Empirical defenses seem to be robust to existing adversarial perturbations but lack of formal robustness guarantee. In this section, we will study previous empirical defenses and certified defenses with the focus on \emph{randomized smoothing}.

\subsection{Empirical defenses}
In practice, the best empirical defense is Adversarial Training (AT) \cite{GoodfellowSS14} \cite{KurakinGB17} \cite{madry2018towards}. AT is operated by first generating the adversarial examples by projected gradient descent and then augmenting them into the training set. Although the classifier yielded based on AT is robust to most gradient-based attacks, it is still difficult to tell whether this classifier is undoubtedly robust. Moreover, adaptive attacks break most empirical defenses. A pioneering study \cite{pmlr-v80-athalye18a} showed that most of the methods are ineffective by using sophisticated designed attacks. To stop this arms race between attackers and defenders, some researchers try to focus on building a verification mechanism with a robustness guarantee. 

\subsection{Certified defenses}
Certified defenses can be divided into {\emph exact} (a.k.a ``complete") methods and {\emph conservative} (a.k.a ``incomplete") methods. Exact methods are usually based on Satisfiability Modulo Theories solvers \cite{katz2017reluplex} \cite{katz2017towards} or mixed-integer linear programming \cite{tjeng2018evaluating}, but they are computationally inefficient. Conservative methods are guaranteed to find adversarial examples if they exist but might mistakenly judge a safe data point as an adversarial one (false positive) \cite{pmlr-v80-wong18a} \cite{wong2018scaling} \cite{salman2019convex} \cite{wang2018efficient} \cite{wang2018mixtrain}  \cite{dvijotham2018dual} \cite{raghunathan2018certified} \cite{raghunathan2018semidefinite} \cite{croce2019provable}\cite{gehr2018ai2} \cite{mirman2018differentiable} \cite{singh2018fast} \cite{pmlr-v80-weng18a} \cite{gowal2018effectiveness} \cite{zhang2018efficient}. 

\subsection{\emph{Randomized Smoothing}}

Introducing randomness into neural network classifiers has been used as a heuristic defense method against adversarial perturbation. Some works  \cite{liu2018towards} \cite{cao2017mitigating} introduced randomness without provable guarantees. \cite{lecuyer2019certified} first used inequalities from differential privacy to prove robustness guarantees of $\ell_2$ and $\ell_1$ norm with Gaussian and Laplacian noises, respectively. Later on, \cite{li2019certified} used information theory to prove a stronger $\ell_2$ robustness guarantee for Gaussian noise. All these robustness guarantees are still loose. \cite{cohen2019certified} provided a tight robustness guarantee for \emph{randomized smoothing}.

\cite{salman2019provably} integrated \emph{randomized smoothing} with adversarial training to improve the performance of smoothed classifiers. \cite{Zhai2020MACER} \cite{FengWCZN20} both considered natural and robust errors in their loss function. The authors tried to maximize the certified radius during training. These methods have great success using \emph{randomized smoothing} but some researchers \cite{Ghiasi2020BREAKING} \cite{kumar2020curse} \cite{blum2020random} raised concerns about \emph{randomized smoothing}.

\cite{kumar2020curse} \cite{blum2020random} claimed that extending the smoothing technique to defend against other attacks can be challenging, especially in the high-dimensional regime. The largest $\ell_p$-radius that can be certified decreases as $O(1/d^{\frac{1}{2} - \frac{1}{p}})$ with dimension $d$ for $p > 2$. This means that the radius decreases due to the curse of dimensionality.

\section{Problem Setup}
In this section, we will set up our problem mathematically to make this paper self-contained. The notations frequently used in this paper are described in Table~\ref{tab:notation} in the Appendix. Here, we consider the conventional classification problem. Suppose that $\mathcal{X} \subseteq \real^m$ is the set of data and $\mathcal{Y} \subseteq \real^K$ is the set of labels.
We can train a (base) classifier $f : \mathcal{X} \rightarrow \mathcal{Y}$.
If for each $x$ one can find a $\|\delta\|_p \leq \epsilon$ such that $\argmax_{i\in[K]}f_i(x+\delta) \neq \argmax_{i\in[K]}f_i(x)$ , where $\|\cdot\|_p$ denotes $p$-norm for $p > 1$ and $[K] = \{1,2,\cdots,K\}$. We call $x_{adv} = x + \delta$ adversarial example. 

The following are our goals:
\begin{enumerate}
    \item To find a transformation $\tau : C(\mathcal{X};\real^K) \rightarrow C^\infty(\mathcal{X};\real^K)$, with the purpose of translating a classifier into a smoothed classifier, denoted as $\Tilde{f} = \tau(f)$.
    \item To find a minimal $R$ with respect to $p$-norm,  where $p>1$, such that $\argmax_{i\in[K]}\Tilde{f_i}(x+R) \neq \argmax_{i\in[K]}\Tilde{f_i}(x)$. We call $R$ the certified radius of $x$.
\end{enumerate}

\begin{figure}[!t]
    \centering
    \includegraphics[width=.45\textwidth]{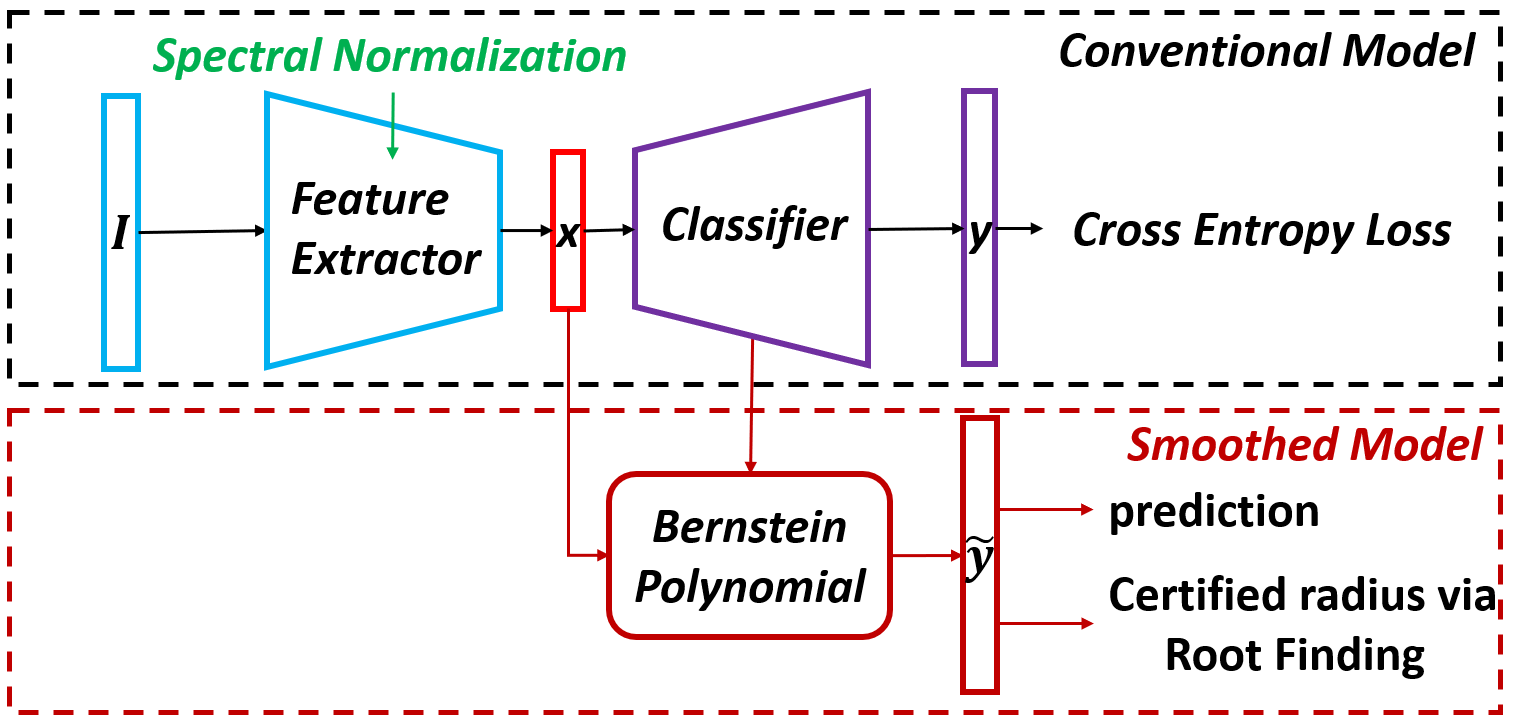}
    \caption{Architecture Design. {\bf Top:} A typical image classifier specialized in spectral normalization, adding to each layer of a feature extractor. We compress the layer $x$ to a small dimension. {\bf Bottom:} We adopt Bernstein polynomial to smooth the classifier for prediction and certification.} 
    \label{fig:Arch}
\end{figure}

\section{Proposed Method}
In this paper, we concern about ``What is a smooth classifier?''. We can think it as a function that can be differentiated infinitely times, and the function we differentiate is still continuous. The simplest example we can think of is polynomials. Hence, we try to approximate a neural network classifier via Bernstein polynomial. Unlike convolution with a Gaussian distribution or other distributions, the exact computation of Bernstein polynomial is more natural to apply.

Our method contains three parts, as illustrated in Figure~\ref{fig:Arch}.
The first part is the spectral normalization. This mechanism is usually introduced to stably train the discriminator in Generative Adversarial Networks (GANs). Spectral normalization is adopted here because it can make the feature extractor's output be bounded due to the change of input. 

The second part is the Bernstein polynomial approximation, which is the most important one. Bernstein polynomial takes a feature vector and the classifier into consideration (See Equation~(\ref{eqn:d-dim-bpoly})). We can also consider this form as uniform sampling in $[0,1]^d$ and place these samples into the base classifier to obtain the coefficients. With these coefficients, we manage to use a linear combination of Bernstein basis to construct a smoother version of the base classifier. To overcome the computational complexity of high dimensional Bernstein polynomials, we compress $x$ to have small (say 3$\sim$6) dimensions. 

The last part is the process of calculating the certified radius via root-finding. We construct nonlinear systems of equations to characterize the points on the decision boundary and solve them by numerical root-finding techniques. After applying this process, we obtain a certified radius for each input data.

\subsection{Spectral Normalization (SN)}
According to \cite{miyato2018spectral}, spectral normalization controls the Lipschitz constant of neural networks that were originally used in stabilizing the training of the discriminator in GANs. The formulation is stated as:

\begin{align}\label{eqn:SN}
    W_{SN}^{(i)} = \frac{W_i}{\sigma(W_i)},\sigma(W_i) = \max_{x:x\neq0}\frac{\|W_ix\|_2}{\|x\|_2},
\end{align}
which is equivalent to being divided by the greatest singular value of $W_i$, the weights of $i$-th layer. Suppose that our feature extractor $G$ (see Figure 2) is the composition function composed of Lipschitz continuous functions $g_i(x^{(i)};W_{SN}^{(i)}):= ReLU(W_{SN}^{(i)}x^{(i)} + b^{(i)})$, where $x^{(i)}$ is the output of $g_{i-1}$ and $b^{(i)}$ is the bias for $i=1,2,\cdots,M$, $i.e.$, 

\begin{align}\label{eqn:lips1}
    G(x^{(1)}) := g_M(g_{M-1}\cdots g_1(x^{(1)};W_{SN}^{(1)});W_{SN}^{(2)})\cdots;W_{SN}^{(M)}),
\end{align}
where $x^{(1)} = I$ is the input data. For each $g_i$ we have

\begin{align}\label{eqn:lips2}
    \|g_i(x^{(i)}+\delta;W_{SN}^{(i)}) - g_i(x^{(i)};W_{SN}^{(i)})\|_2 \leq \|W_{SN}^{(i)}\|_2\|\delta\|_2.
\end{align}
Given two input data $I$ and $I'$, and by Equation~(\ref{eqn:SN}), we have $\|W_{SN}^{(i)}\|_2 = 1$ for $i = 1,2,\cdots,M$. We can derive that the Lipschitz constant of $G$ is one by Equation~(\ref{eqn:lips1}) and (\ref{eqn:lips2}) as:
\begin{center}
    If $G(I) \neq G(I')$, there exists an $R>0$ such that
\end{center}
\begin{align}\label{eqn:lips}
    R \leq \|G(I) - G(I')\|_2 \leq \|I-I'\|_2.
\end{align}
We can further observe from Equation~(\ref{eqn:lips}) that the upper bound of the certified radius is tighter in feature domain than that in the image domain.



\subsection{Smoothing the Classifier via Bernstein Polynomial}

Weierstrass approximation theorem asserts that a continuous function on a closed and bounded interval can be uniformly approximated on that interval by polynomials to any degree of accuracy. All of the formal definitions are described as follows \cite{davis1975interpolation}.

\begin{theorem}[{\bf Weierstrass Approximation Theorem}]\label{Weierstrass}
  Let $f \in C([a,b])$. Given $\epsilon >0$, there exists a polynomial $p(x)$ such that 
  
  \begin{equation*}
      |f(x) - p(x)| \leq \epsilon, \hspace{2ex} \forall x\in [a,b].
  \end{equation*}
\end{theorem}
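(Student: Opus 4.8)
The plan is to give a constructive proof via Bernstein polynomials, which is precisely the tool this paper builds upon. First I would reduce the statement to the unit interval: since the affine map $\phi(t) = a + t(b-a)$ sends $[0,1]$ onto $[a,b]$ homeomorphically, the function $g := f \circ \phi$ lies in $C([0,1])$, and any polynomial $q$ with $|g(t) - q(t)| \le \epsilon$ on $[0,1]$ yields the polynomial $p := q \circ \phi^{-1}$ satisfying $|f(x) - p(x)| \le \epsilon$ on $[a,b]$. Thus it suffices to approximate an arbitrary $g \in C([0,1])$ uniformly by polynomials.

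For this I would introduce the degree-$n$ Bernstein polynomial
\[
B_n(g)(t) = \sum_{k=0}^{n} g\!\left(\tfrac{k}{n}\right) \binom{n}{k} t^k (1-t)^{n-k},
\]
and prove that $B_n(g) \to g$ uniformly on $[0,1]$ as $n \to \infty$; taking $n$ large enough then furnishes the required $q = B_n(g)$. The conceptual engine is that the basis weights $b_{n,k}(t) = \binom{n}{k} t^k (1-t)^{n-k}$ are exactly the probability masses of a Binomial$(n,t)$ variable $X$, so $B_n(g)(t) = \expect[g(X/n)]$ with $\expect[X/n] = t$ and variance $t(1-t)/n$. The two identities I would establish by direct binomial algebra are $\sum_{k=0}^n b_{n,k}(t) = 1$ and $\sum_{k=0}^n (\tfrac{k}{n} - t)^2 b_{n,k}(t) = \tfrac{t(1-t)}{n} \le \tfrac{1}{4n}$.

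With these in hand, the convergence argument proceeds by splitting. Fix $\epsilon > 0$. Since $[0,1]$ is compact, $g$ is uniformly continuous and bounded, say $|g| \le M$; choose $\delta > 0$ so that $|g(s) - g(t)| \le \epsilon/2$ whenever $|s-t| \le \delta$. Writing $g(t) - B_n(g)(t) = \sum_{k} \bigl(g(t) - g(k/n)\bigr) b_{n,k}(t)$ via the first identity, I would separate indices into the ``near'' set where $|k/n - t| \le \delta$, on which each term is bounded by $\epsilon/2$, and the ``far'' set where $|k/n - t| > \delta$, on which uniform continuity gives no help. The hard part is controlling the far set: there I would bound $|g(t) - g(k/n)| \le 2M$ and observe that $\sum_{|k/n - t| > \delta} b_{n,k}(t) \le \delta^{-2} \sum_{k} (k/n - t)^2 b_{n,k}(t) \le \tfrac{1}{4n\delta^2}$, which is the Chebyshev step powered by the variance identity. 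Combining the two pieces yields $|g(t) - B_n(g)(t)| \le \epsilon/2 + M/(2n\delta^2)$ uniformly in $t$, so choosing $n \ge M/(\epsilon \delta^2)$ completes the proof. The only genuine subtlety is the second-moment computation together with the Chebyshev truncation of the far terms; everything else is bookkeeping.
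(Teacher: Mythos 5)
Your proof is correct and takes the same route the paper itself does: the paper treats Weierstrass's theorem as a corollary of Bernstein's convergence theorem (its Theorem~2, cited to Bernstein's 1912 probabilistic argument), and your affine reduction to $[0,1]$ followed by the moment identities and the Chebyshev-style near/far split is exactly the classical proof of that convergence, which the paper invokes by citation rather than writing out. All the details check out, including the second-moment bound $t(1-t)/n \le 1/(4n)$ and the final choice $n \ge M/(\epsilon\delta^2)$.
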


Bernstein proved the Weierstrass Approximation theorem in a probability approach \cite{bernstein1912}. The rationale behind Bernstein polynomial is to compute the expected value $\expect[f(\eta/n)]$, where $\eta \sim Bin(n,x)$ and $Bin(n,x)$ is the binomial distribution. Note that the parameter $n$, relating to how it can be used to smooth the classifier, will be elaborated in detail later. The following Definition~\ref{bpoly} and Theorem~\ref{bernstein} are from \cite{bernstein1912}.

\begin{definition}[{\bf Bernstein Polynomial}]\label{bpoly}
  Let $f(x)$ be a function defined on $[0,1]$. The $n$-th Bernstein Polynomial is defined by
  
  \begin{equation}\label{eqn:bpoly}
      B_n(f;x) = \sum\limits_{k=0}^nf\left(\frac{k}{n}\right)\binom{n}{k}x^k(1-x)^{n-k}.
  \end{equation}
  Note that 
  
  \begin{equation}
      B_n(f;0) = f(0), \hspace{2ex} B_n(f;1) = f(1). \notag
  \end{equation}
\end{definition}


\begin{theorem}[{\bf Bernstein}]\label{bernstein}
  Let $f(x)$ be bounded on $[0,1]$. Then
  
  \begin{equation}
      \lim_{n \rightarrow \infty}B_n(f;x) = f(x)
  \end{equation}
  at any point $x\in[0,1]$ at which $f$ is continuous. If $f \in C([0,1])$, the limit holds uniformly in $[0,1]$.
\end{theorem}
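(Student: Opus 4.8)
The plan is to exploit the probabilistic reading of $B_n(f;x)$ already indicated in the text, namely $B_n(f;x) = \expect[f(\eta/n)]$ with $\eta \sim Bin(n,x)$, together with a concentration argument that splits the defining sum into terms where $k/n$ lies near $x$ and terms where it lies far from $x$. Throughout I write $p_{n,k}(x) = \binom{n}{k}x^k(1-x)^{n-k}$ for the Bernstein basis functions.

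First I would record three moment identities for this basis, which are exactly the zeroth, first, and second moments of $\eta/n$:
\begin{equation*}
\sum_{k=0}^n p_{n,k}(x) = 1,\qquad \sum_{k=0}^n \frac{k}{n}\,p_{n,k}(x) = x,\qquad \sum_{k=0}^n \left(\frac{k}{n}-x\right)^2 p_{n,k}(x) = \frac{x(1-x)}{n}.
\end{equation*}
The first two follow from the binomial theorem (differentiating once in $x$ for the mean), and the third encodes that the variance of $\eta/n$ equals $x(1-x)/n$. I expect verifying these identities, and in particular the variance identity, to be the only genuinely computational step.

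Next, using the first identity I would write $B_n(f;x) - f(x) = \sum_{k=0}^n \big[f(k/n) - f(x)\big]\,p_{n,k}(x)$ and fix $\epsilon > 0$. At a point $x_0$ where $f$ is continuous, choose $\delta > 0$ so that $|f(y)-f(x_0)| < \epsilon$ whenever $|y - x_0| < \delta$, and split the indices into $A = \{k : |k/n - x_0| < \delta\}$ and its complement $A^c$. Over $A$ the increments are below $\epsilon$ and the weights sum to at most $1$, contributing at most $\epsilon$. Over $A^c$, boundedness $|f| \le M$ makes each increment at most $2M$, while Chebyshev's inequality combined with the variance identity bounds the total weight on $A^c$ by $x_0(1-x_0)/(n\delta^2) \le 1/(4n\delta^2)$; hence this part is at most $M/(2n\delta^2) \to 0$. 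Passing to the limit gives $\limsup_{n} |B_n(f;x_0) - f(x_0)| \le \epsilon$, and as $\epsilon$ was arbitrary, pointwise convergence follows.

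Finally, for the uniform statement when $f \in C([0,1])$ I would upgrade continuity to uniform continuity, valid since $[0,1]$ is compact, so a single $\delta$ works for every $x$ at once, and invoke the uniform bound $x(1-x) \le 1/4$ on $[0,1]$ in the Chebyshev estimate. The far-term bound $M/(2n\delta^2)$ then becomes independent of $x$, yielding $\sup_{x \in [0,1]} |B_n(f;x) - f(x)| \le \epsilon + M/(2n\delta^2)$, which is below $2\epsilon$ for all large $n$ and hence gives uniform convergence. The conceptual content lives entirely in the variance identity and Chebyshev's inequality; the main obstacle is organizing the near/far split cleanly and, technically, establishing the second-moment identity, after which both the pointwise and uniform conclusions drop out of the same estimate.
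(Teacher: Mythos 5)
Your proof is correct and takes essentially the same route the paper indicates: the paper states Theorem~2 without proof, citing Bernstein's 1912 probabilistic argument via $B_n(f;x)=\expect[f(\eta/n)]$ with $\eta \sim Bin(n,x)$, and your near/far split driven by the variance identity $\sum_{k}(k/n-x)^2 p_{n,k}(x)=x(1-x)/n$ together with Chebyshev's inequality is precisely that classical proof. The details check out, including the constant $2M\cdot\frac{1}{4n\delta^2}=\frac{M}{2n\delta^2}$ and the upgrade to uniform convergence via uniform continuity on the compact interval and the uniform bound $x(1-x)\le\frac{1}{4}$.
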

Back to Theorem~\ref{Weierstrass}, we can take it as a corollary of Theorem~\ref{bernstein}.
Since the neural network is a multi-dimensional function, we have to generalize the one dimensional Bernstein polynomial (Definition~\ref{bpoly}) into a multi-dimensional version. The following is one kind of generalization which considers independent random variables expectation \cite{veretennikov2016partial}. 

\begin{definition}[{\bf $\bm{d}$-dimensional Bernstein Polynomial}]\label{def:bpoly-multi}
  Let $f: [0,1]^d \rightarrow \real$. We can define $d$-dimensional Bernstein Polynomial by
  
  \begin{equation}
    \begin{split}
        &B_{n_1,\cdots,n_d}(f;x_1,\cdots,x_d)\\
       &= \sum\limits_{\substack{0\leq k_j \leq n_j \\ j \in \{1,\cdots,d\}}}f\left(\frac{k_1}{n_1},\cdots,\frac{k_d}{n_d}\right)\prod\limits_{j=1}^d \binom{n_j}{k_j}{x_j}^{k_j}(1-x_j)^{n_j - k_j}.\notag
    \end{split}
  \end{equation}
  For simplicity, we abbreviate the notation by setting $\frac{k}{n} := \left(\frac{k_1}{n_1},\cdots,\frac{k_d}{n_d}\right)$ and $b_j := \binom{n_j}{k_j}{x_j}^{k_j}(1-x_j)^{n_j - k_j}$, and have
  
  \begin{equation}\label{eqn:d-dim-bpoly}
       B_n(f;x) = \sum\limits_{\substack{0\leq k_j \leq n_j \\ j \in \{1,\cdots,d\}}} f\left(\frac{k}{n}\right)\prod\limits_{j=1}^d b_j.
  \end{equation}
\end{definition}

\subsubsection{How to Smooth the Classifier via Bernstein Polynomial?}
As we can see from Theorem 2, as $n$ becomes larger, the Bernstein polynomial will gradually approximate the original function. Thus, the rationale behind our Bernstein polynomial-based smoothed classifier is that we can choose a proper $n$ to smooth and approximate the base classifier. Obviously, there is a trade-off between smoothness (robustness) and accuracy (natural accuracy). We use an example to visualize Theorem~\ref{bernstein} in Figure~\ref{fig:toy_example_trials}.

\begin{figure}[!htbp]
    \centering
    \includegraphics[width=.4\textwidth]{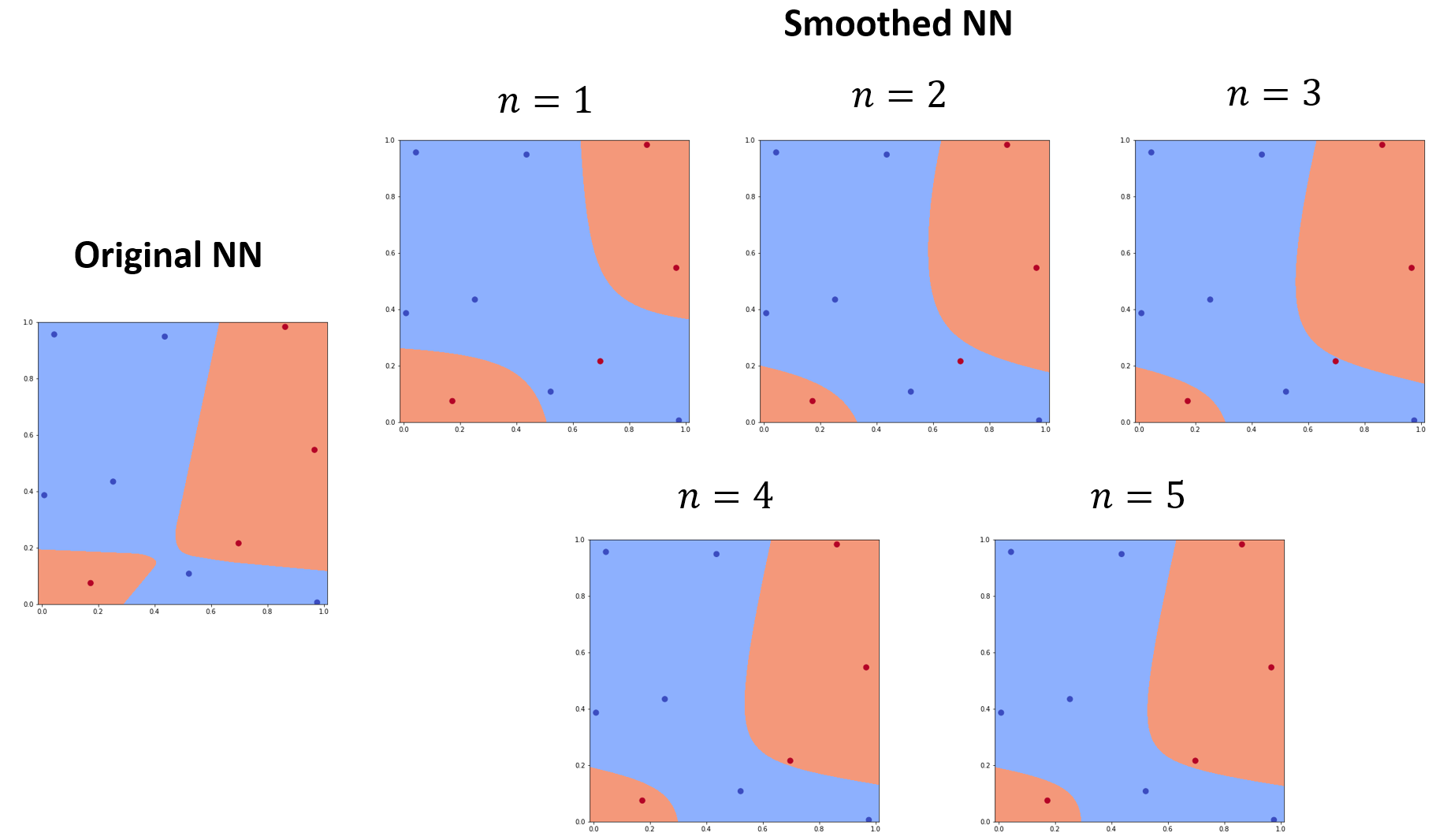}
    \caption{Decision boundaries of the neural network variants under different $n$s. As we can see, the smoothed decision boundary obtained with a larger $n$ gradually approximates the original neural network.}
    \label{fig:toy_example_trials}
\end{figure}
By Definition~\ref{def:bpoly-multi}, we can transform our classifier $f:[0,1]^d\rightarrow\real^K$, $f = [f_1,f_2,\cdots, f_K]$, into a smoothed classifier as

\begin{align}
    \Tilde{f}(x) = [B_n(f_1;x),\cdots,B_n(f_K;x)],
\end{align}
with a proper choice of $n$.

Without any prior knowledge of the pre-trained neural network, we set $n_1 = n_2 = \cdots = n_d = n$, which denotes the uniform sampling in all directions. We can see that the complexity of computing $d$-dimensional Bernstein Polynomial is $O(n^d)$, which is too high in the image domain even if we take $n=2$. To overcome this problem, we propose to exploit dimension reduction in the feature layer. As we will see later in our empirical results, less than $1\%$ natural accuracy will be sacrificed in MNIST \cite{lecun2010mnist} and CIFAR-10 due to dimensionality reduction in the feature space. 
\subsection{Certified Radius via Root-Finding}
To find the certified radius of the input data, different from \cite{cohen2019certified}, we are not trying to find a closed form solution. Instead, we propose to use numerical root-finding techniques to find an adversarial example in the feature domain. 

We abbreviate the notation by setting $\beta_i = B_n(f_{i};x)$ and $S(\beta)_i = \exp(\beta_i)/\sum_{j = 1}^{K}\exp(\beta_j)$, where $K$ is the number of classes. Note that $S(\beta)$ is the traditional softmax function which represents the probability vector. Also, recall that $d \leq K$ is the dimension of a feature vector $x_0$.

In the worst case analysis, the runner-up class is the easiest target for attackers. Hence, we assume that the closest point to a feature vector $x_0$ on the decision boundary between the top two predictions of the smoothed classifier $\Tilde{f}(x) = \beta = [\beta_1,\cdots,\beta_K]$ follows $\beta_{\rho(1)} = \beta_{\rho(2)}$, where $\rho$ is the mapping from ranks to predictions. Therefore, the nonlinear systems of equations for characterizing the points on the decision boundary of smoothed classifier $\Tilde{f}(x)$ are described as:

\begin{equation}
    \begin{cases}
    \phi_0 := \beta_{\rho(1)} - \beta_{\rho(2)} = 0\\
    \phi_1 := S(\beta)_{\rho(1)} - 0.5 = 0\\
    \phi_2 := S(\beta)_{\rho(2)} - 0.5 = 0\\
    \phi_3 := S(\beta)_{\rho(3)} = 0\\
    \hspace{8ex}\vdots\\
    \phi_d := S(\beta)_{\rho(d)} = 0\\ 
    \end{cases}
    \label{eqn:nonlinear}
\end{equation}
where $\phi_0$ can be seen as the decision boundary between the top two predictions, and $\phi_1$ and $\phi_2$ are the probabilities of the top two predictions, respectively. 
The remaining equations indicate that the probabilities other than $\rho(1)$ and $\rho(2)$ are all zeros.
The only thing we need to do is to start from the initial guess (feature vector $x_0$) and find the adversary nearest to it.

To solve Equation~(\ref{eqn:nonlinear}), we transform it into a minimization problem. We consider the function $\Phi: \real^d \rightarrow \real^d$, $\Phi = (\phi_0, \phi_1, \cdots, \phi_d)$, and minimize $\frac{1}{2}\|\Phi(x)\|_2^2$ is equivalently to finding 

\begin{align}\label{eqn:ls}
    x^* = \argmin_{x}\{F(x)\},
\end{align}
where  $F(x)=\frac{1}{2}\sum\limits_{i=0}^{d}(\phi_i(x))^2=\frac{1}{2}\|\Phi(x)\|_2^2.$

Suppose that $x^*$ is the optimal solution to Equation~(\ref{eqn:nonlinear}), that is, $\Phi(x^*) = 0$. According Equation~(\ref{eqn:ls}), we have $F(x^*) = 0$ and $F(x) > 0$ if $\Phi(x) \neq 0$. The details of how to solve Equation~(\ref{eqn:ls}) are described in the Appendix. Let $\emph{sol}$ denote the solution of Equation~(\ref{eqn:ls}), we can easily compute $R = \|x_0 - \emph{sol}\|_p$ for $p>1$, which is known as the certified radius. Remark that this radius is computed in the feature domain (see Equation~(\ref{eqn:lips})). On the other hand, we prove that our method satisfies the following proposition specialized in norm independence.

\begin{proposition}\label{prop:1}
    Let $\Tilde{f}: [0,1]^d \rightarrow \real^K$ be a smoothed function and $A=\{\Tilde{y}\in \real^K : \Tilde{y}_i\neq\Tilde{y}_j, \forall i \neq j\}$ denote the area without the smoothed decision boundary. For any $x \in [0,1]^d$ and $\Tilde{f}(x) \in A$, there exists an $R>0$ such that $\argmax_{i\in[K]}\Tilde{f_i}(x+\hat{\delta})= \argmax_{i\in[K]}\Tilde{f_i}(x)$ whenever $\|\hat{\delta}\|_p < R$ \ for \ $p>1$.
\end{proposition}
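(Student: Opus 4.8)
The plan is to treat this as a standard fact from point-set topology: the $\argmax$ of a continuous vector-valued map is locally constant at any point where the maximizing coordinate is strictly dominant. The hypothesis $\tilde{f}(x)\in A$ is precisely what guarantees strict dominance, while the smoothness of the Bernstein construction supplies the required continuity, so the whole argument reduces to an $\varepsilon$--$\delta$ estimate together with the equivalence of norms on $\real^d$.

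First I would fix $x$ with $\tilde{f}(x)\in A$ and set $k^* := \argmax_{i\in[K]}\tilde{f}_i(x)$. Since membership in $A$ means the coordinates $\tilde{f}_1(x),\dots,\tilde{f}_K(x)$ are pairwise distinct, the maximizer $k^*$ is unique and the margin
\[
\gamma := \tilde{f}_{k^*}(x) - \max_{i\neq k^*}\tilde{f}_i(x)
\]
is strictly positive. This strict gap is the essential content of the hypothesis; it is exactly what prevents the decision from flipping under an arbitrarily small perturbation.

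Next I would invoke continuity. Each component $\tilde{f}_i = B_n(f_i;\cdot)$ is a polynomial, hence continuous (indeed $C^\infty$), so there exists $\delta_0>0$ such that $\|\hat{\delta}\|_2<\delta_0$ implies $|\tilde{f}_i(x+\hat{\delta})-\tilde{f}_i(x)|<\gamma/2$ for every $i\in[K]$. For such $\hat{\delta}$ one has $\tilde{f}_{k^*}(x+\hat{\delta}) > \tilde{f}_{k^*}(x)-\gamma/2 = \max_{i\neq k^*}\tilde{f}_i(x)+\gamma/2$, while for each $i\neq k^*$ one has $\tilde{f}_i(x+\hat{\delta}) < \tilde{f}_i(x)+\gamma/2 \le \max_{j\neq k^*}\tilde{f}_j(x)+\gamma/2$. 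Chaining these inequalities shows $k^*$ remains the unique maximizer, i.e.\ $\argmax_{i}\tilde{f}_i(x+\hat{\delta})=\argmax_{i}\tilde{f}_i(x)$. Norm-independence then follows from the equivalence of all norms on the finite-dimensional space $\real^d$: for any $p>1$ there is a constant $c_p>0$ with $\|\hat{\delta}\|_2\le c_p\|\hat{\delta}\|_p$, so choosing $R := \delta_0/c_p$ makes $\|\hat{\delta}\|_p<R$ force $\|\hat{\delta}\|_2<\delta_0$, which is exactly the conclusion.

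I do not anticipate a genuine obstacle, since the statement is morally just ``$\argmax$ is locally constant at a strict maximizer.'' The two points that need care are making the uniqueness of $k^*$ explicit from the definition of $A$, and a minor bookkeeping issue about the domain: the point $x+\hat{\delta}$ must be evaluated inside $[0,1]^d$, but because $\tilde{f}$ extends to a genuine polynomial on all of $\real^d$ this causes no real difficulty, and if one insists on remaining in the cube one simply shrinks $R$ near the boundary. Neither point affects the mathematical substance of the argument.
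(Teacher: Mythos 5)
Your proof is correct, but it takes a genuinely different route from the paper's. The paper proves a more general lemma (stated for merely continuous $f$, explicitly noting that smoothness is unnecessary) by contradiction, split into two cases: either the perturbed $\argmax$ becomes a multi-element set, or it switches to a different index; the second case is handled topologically, by arguing that the preimage of an open ball $\ball(f(x),r)$ under a continuous map must be open, so that a nearby point falling outside $X=\{x: f(x)\in A\}$ would contradict continuity. You instead argue directly and quantitatively: you extract the strict margin $\gamma = \tilde f_{k^*}(x)-\max_{i\neq k^*}\tilde f_i(x)>0$ from the hypothesis $\tilde f(x)\in A$, use continuity of the finitely many components to obtain $\delta_0$ with all perturbations below $\gamma/2$, chain the inequalities to show the maximizer survives, and then dispatch arbitrary $p>1$ via equivalence of norms on $\real^d$. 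Your approach buys an explicit radius (in terms of the margin and the modulus of continuity of $\tilde f$ at $x$) and avoids a soft spot in the paper's write-up: the paper asserts $f^{-1}(\ball(f(x),r)) = \ball(x,\delta)$, which is not literally correct (the preimage of an open ball is open but need not itself be a ball), and its contradiction is stated somewhat loosely, whereas your $\varepsilon$--$\delta$ chain is airtight. What the paper's framing buys in exchange is the conceptual summary that $A$ is open, hence $X=f^{-1}(A)$ is open and the label map is locally constant on it. Note that your argument, like the paper's lemma, uses only continuity and never the smoothness of the Bernstein construction, and your two closing caveats --- uniqueness of $k^*$ following from pairwise distinctness of coordinates, and the bookkeeping for $x+\hat{\delta}$ near the boundary of $[0,1]^d$ --- are exactly the right points to flag and are handled adequately.
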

The proof can be found in the Appendix.


\subsection{Inference procedure}
Suppose that we have a pre-trained model for the purpose of either prediction or certification. The prediction is a simple substitution of a smoothed classifier for the base classifier. Hence, we mainly illustrate our smoothing and certification procedures in Algorithm~\ref{alg:smooth} and \ref{alg:cert}, respectively.

\begin{algorithm}
\SetAlgoLined
\KwInput{A classifier $f$, a feature $x_0$, and a parameter $n$}
\KwOutput{$\beta(= \Tilde{f}(x_0))$, the output of the smoothed classifier}
 Initialize an empty array $\beta = [ \hspace{1ex}]$\;
 \For{i=1,2,$\cdots$, K}
 {Compute Equation~(\ref{eqn:d-dim-bpoly}) by $f_i, x_0$ and $n$ to get $\beta_i$\;
 Append $\beta_i$ to $\beta$\;
 }
 \Return $\beta$
 \caption{Smooth($f$, $x_0$, $n$)}
 \label{alg:smooth}
\end{algorithm}

\begin{algorithm}
\SetAlgoLined
\KwInput{An input data $I$, a feature extractor $G$, a classifier $f$, and a parameter $n$}
\KwOutput{Certified Radius $R$}
 $x_0 = G(I)$\;
 $\beta$ = Smooth($f$, $x_0$, $n$)\;
 Setup Equation~(\ref{eqn:nonlinear}) by $\beta$\ and the dimension of $x_0$\;
 Let $x_0$ be the initial guess\;
 Transform Equation~(\ref{eqn:nonlinear}) into Equation~(\ref{eqn:ls})\;
 Solve Equation~(\ref{eqn:ls}) by a least-squares solver\;
 Project $\emph{sol}$ onto $[0,1]^d$\;
 $R = \|x_0 - \emph{sol}\|_p$\;
 \Return $R$
 \caption{Certification($I$, $G$, $f$, $n$)}
 \label{alg:cert}
\end{algorithm}
\subsubsection{Deterministic algorithm}
Note that our deterministic algorithm has two-fold meanings, $i.e.$, deterministic smoothing and deterministic certification. By the definition of the deterministic algorithm, given a particular input, it always produces the same output. Hence, Algorithm~\ref{alg:smooth} and \ref{alg:cert} are both deterministic. There are some advantages and disadvantages to the robust classification task. The most significant advantage is that our algorithm does not depend on any random value; it always produces a stable output (robustness). The disadvantage is that for attackers, attacking the smoothed classifier is no more difficult than attacking the non-smoothed classifier since it is end-to-end differentiable. Thus, we do not claim that our method is a defense mechanism but a certification/verification method. 

\section{Experiments}

In this section, we aim to compare our method with state of the art certification methods, \emph{randomized smoothing}  \cite{cohen2019certified} \cite{salman2019provably}, on CIFAR-10. We also compare the empirical upper bound using PGD attack \cite{madry2018towards} with our certified accuracy at all radii. Other results obtained from different datasets such as MNIST, SVHN \cite{netzer2011reading}, and CIFAR-100 \cite{krizhevsky2009learning} would be discussed in the Appendix. In addition, we perform the ablation study. 

\subsection{Important Metrics}

We are interested in three significant metrics, natural accuracy, robust accuracy, and certified accuracy. Natural accuracy is the number of correctly identified clean testing examples divided by the number of total testing examples. Robust accuracy is the number of correctly identified adversarial testing examples divided by the number of total testing examples. Certified accuracy at radius $R$ is defined as the fraction of testing examples correctly predicted by the classifier and are provably robust within an $\ell_2$ ball of radius $R$. For brevity, we define the certified accuracies at all radii as ``certified curve".






\begin{figure}[!htbp]
    \centering
    \includegraphics[width=.38\textwidth]{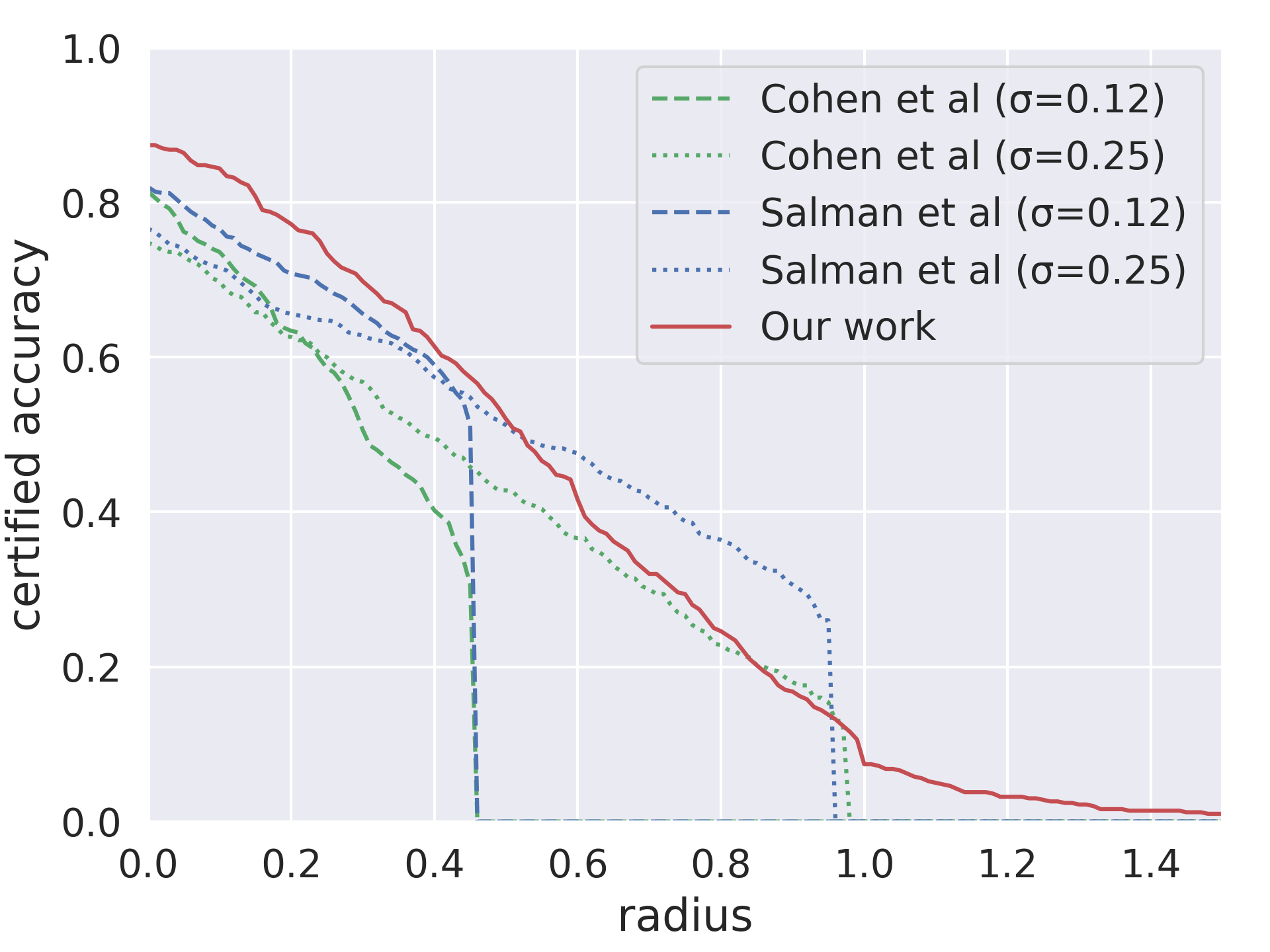}
    \caption{Comparing our adversarially-trained model vs. \cite{cohen2019certified} and \cite{salman2019provably} with $\sigma=0.12$ and $\sigma=0.25$ on CIFAR-10. Our result exhibits the highest natural accuracy under $n = 1$ and $d = 5$. For fair comparison with their methods, we chose the NN models with the top two natural accuracy (the certified accuracy at radius equals to zero).} 
    \label{fig:Our_work_compare}
\end{figure}

\subsection{Experimental Setup}
The architecture of our method is illustrated in Figure~\ref{fig:Arch}. We trained the model by adding spectral normalization to every layer of feature extractor $G$ and reducing the feature vector's dimension to a small number. According to \cite{salman2019provably}, they employ PGD adversarial training to improve the certified curve empirically. Hence, we aslo incorporated it into our model architecture at the expense of decreasing the natural accuracy slightly. As for inference, we applied the reduced feature $d$ to find the exact distance to the decision boundary by using root-finding algorithms from scipy.optimize package \cite{2020SciPy-NMeth}. Details of the algorithm we used will be further discussed in the Appendix.

We employed a 110-layer residual network as our base model for a reasonable comparison, which was also adopted in \cite{cohen2019certified} and \cite{salman2019provably}. Furthermore, there are two primary hyperparameters: the parameter $n$ and the dimension $d$ of the feature vector, affecting our smoothed model.

\subsection{Evaluation Results}
We certified a subset of 500 examples from the CIFAR-10 test set, as done in  \cite{cohen2019certified} and \cite{salman2019provably}, that was released on the Github pages. Our method took 6.48 seconds to certify each example on an NVIDIA RTX 2080 Ti with $n=1$, while 15 seconds are required in \cite{cohen2019certified} under the same setting.

We primarily compare with \cite{cohen2019certified} and \cite{salman2019provably} as they were shown to outperform all other provable $\ell_2$-defenses thoroughly. Recall that our method is norm-independent, as described in Proposition~\ref{prop:1}, we did the experiment in $\ell_2$ norm to compare with their results. As our work claims to maintain high natural accuracy, we utilized $\sigma=0.12$ and $\sigma=0.25$ from \cite{cohen2019certified} and \cite{salman2019provably}, which lead to the top two natural accuracy of their results. For \cite{salman2019provably}, we took one of their results: PGD adversarial training with two steps and $\epsilon=64/255$ for comparison since it has the highest natural accuracy among all of their settings. Note that the results are directly taken from their GitHub pages.

Although we added additional components such as spectral normalization (SN) to the base model for training, the natural accuracy only dropped from 93.7\% to 93.5\%, which nearly carries no harm than adding Gaussian noise for data augmentation. Then, by applying PGD adversarial training with 20 steps and $\|\epsilon\|_2=0.1$ to the base model with SN, the accuracy dropped from 93.5\% to 87.6\%. We call the adversarial trained model ``baseline" in Table~\ref{tab:cifar_attack}. We applied our method with $n = 1$ to the ``baseline" model and got 87.4\% accuracy, which is the highest accuracy among the methods used for comparison. As shown in Figure~\ref{fig:Our_work_compare}, we have the following observations for comparison results: 
\begin{enumerate}
    \item Our method obtains a higher natural accuracy at $R = 0$.
    \item Our method exhibits higher certified accuracies at lower radii ($R < 0.5$) but still obtains non-zero accuracies at larger radii ($R > 1.0$).
\end{enumerate}

To check the rationality of our theoretical results, the empirical upper bounds of the certified curve obtained from practical attacks (PGD attack) were used for comparison. Although there are some uncertainties in root finding, the default of the sum of squares error and the approximate solution error are set to $1.49\times10^{-8}$ which is an acceptable error. Therefore, we can observe from Figure~\ref{fig:Upper_bound} that our certification results approximate the empirical upper bounds. The errors caused by spectral normalization and other possible reasons are discussed in the Appendix.

\begin{figure}[!htbp]
    \centering
    \includegraphics[width=.38\textwidth]{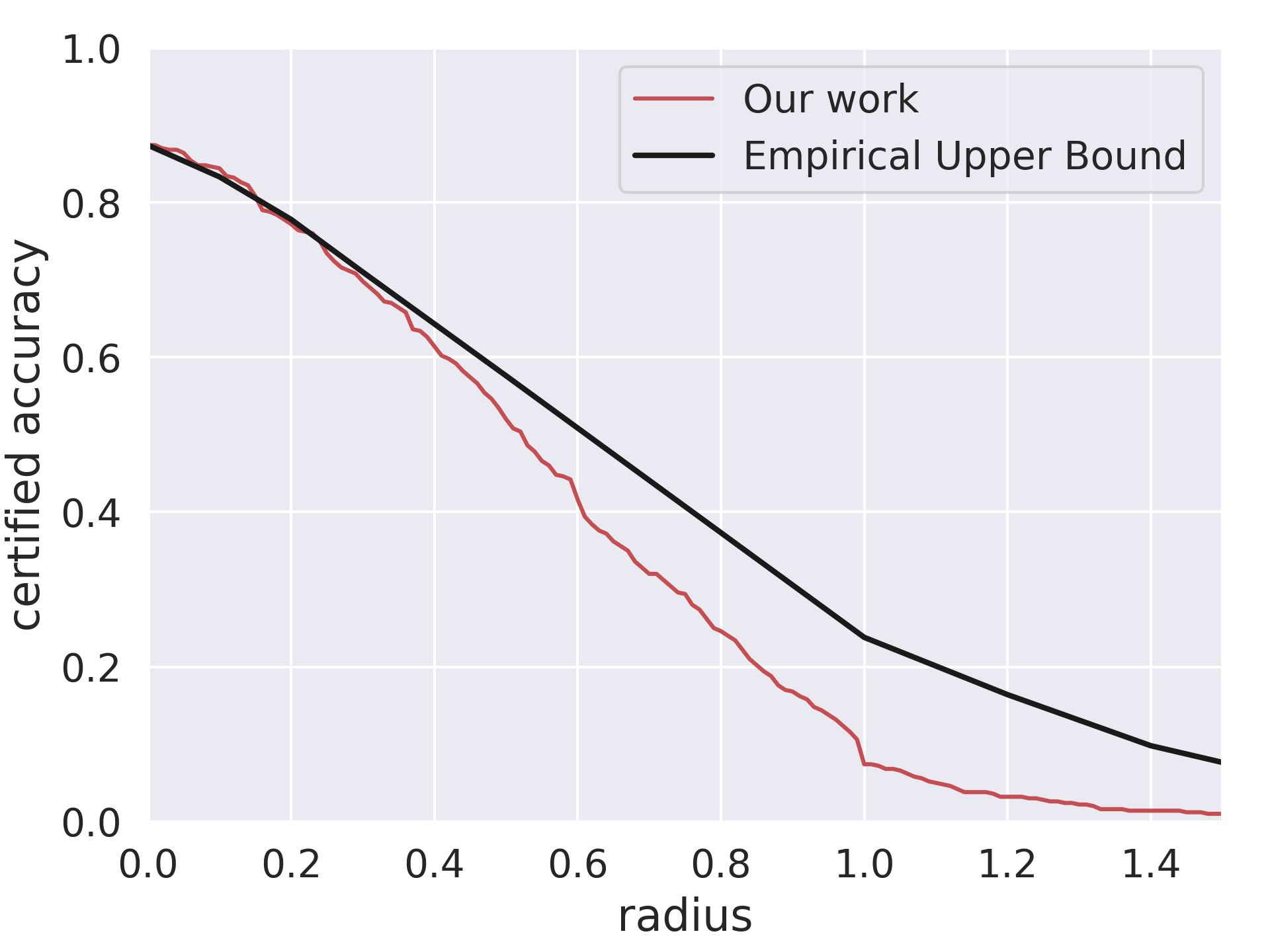}
    \caption{The empirical upper bounds vs. our certified curve on CIFAR-10. The upper bounds are obtained by attacking the smoothed model with $\|\epsilon\|_2\in$  \{0,0.1,0.2,0.3,0.5,1,1.2,1.4,1.5\} via the PGD attack with 20 steps. Our certified curve is relatively conservative, and appears to be beneath the empirical upper bounds.}
    \label{fig:Upper_bound}
\end{figure}

We can observe from Algorithm~\ref{alg:smooth}, where $K$ is the number of the classes of a dataset, $n$ and $d$ are the parameters in Equation~(\ref{eqn:d-dim-bpoly}), the complexity is $O(Kn^d)$. As $K$ increases, the algorithm may cost an unacceptable time. Therefore, due to the high computational complexity, ImageNet \cite{deng2009imagenet} was not considered in our results. The scalability of our work would be an interesting issue for future study.



\subsection{Ablation study}
Our method involves some issues, including dimension reduction, different model architectures, and hyperparameters, that may affect the performance. To explore the effectiveness of each part, we conduct ablation studies. 

\begin{table}[ht]
    \centering
        \begin{tabular}{llll}
            \toprule
            Datasets &   Methods &     Acc &     PGD \\
            \midrule
            \midrule
             \multirow{8}{5em}{CIFAR-10}&
             Baseline &  87.6\%   &  63.6\%\\
             \midrule
             &Ours with $n=1$ &  87.4\% &   63.2\%\\
             &Ours with $n=2$ &  86.8\% &   63.0\%\\
             &Ours with $n=3$ &  87.2\% &   62.6\%\\
             &Ours with $n=4$ &  87.4\% &  62.6\%\\
             &Ours with $n=5$ &  87.4\% &  62.6\% \\
             &Ours with $n=6$ &  87.4\% &   63.0\% \\
             &Ours with $n=7$ &  87.4\% &  63.2\% \\
            \bottomrule
        \end{tabular}
    \caption{Empirical result comparison between the base classifier and the smoothed classifier. The perturbation of the attacks were set to $\|\epsilon\|_\infty=2/255$ in PGD with 20 steps. Acc denotes natural accuracy. Note that CIFAR-10 refers to a subset of 500 examples from the test set.}
    \label{tab:cifar_attack}
\end{table}

First, the robust accuracy should be proportional to the certified accuracy theoretically. We conducted an experiment to find $n$ that results in the best robust accuracy. From Table~\ref{tab:cifar_attack}, we observe that as $n = 1$ and $n = 7$, the natural accuracy and the robust accuracy under PGD are the highest among all $n$s. Since our method with $n = 1$ spends the lowest computation time, we used $n = 1$ in all the other experiments. Remark that our method is a certification instead of a defense method. Thus, as we chose $n = 1 \sim 7$, natural accuracy and robust accuracy are slightly lower than the baseline. However, we discover an intriguing phenomenon in other datasets and models in that we may have better natural accuracy and robust accuracy than the baseline with a proper choice of $n$ (see Table~\ref{tab:mnist_attack} in the Appendix).

\begin{figure}[ht]
    \centering
    \includegraphics[width=.485\textwidth]{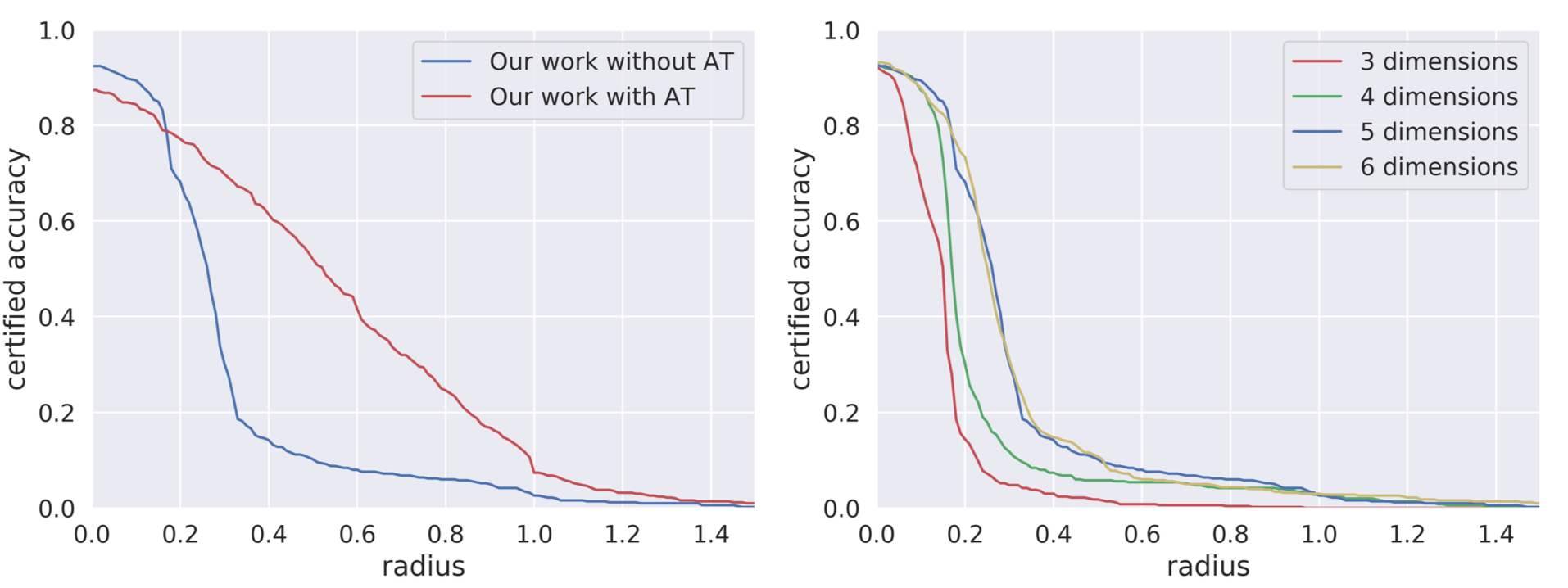}
    \caption{{\bf Left:} Comparison between our method with and without adversarial training. {\bf Right:} Certified curve under different dimensions in our non-adversarially trained smoothed classifier. Note that the dimension represents the number of equations we solve to find the root. }
    \label{fig:merge}
\end{figure}

Second, on the left side of Figure~\ref{fig:merge}, we show that adversarial training is still a useful heuristic strategy that improves the certified curve while sacrificing only limited natural accuracy at radius = $0$. 

In addition, the dimension of the feature vector influences the certified curve a lot. We can observe from the right side of Figure~\ref{fig:merge} that the feature dimensions of length five and six lead to the best certified curve. However, the larger dimension, the longer the computation time. This is why the feature dimension is set to be five in our experiments. Since our feature domain is constrained in the $[0,1]^d$, there is an upper bound of the certified radius, which is the dimension's square root if we choose $\ell_2$ norm. 

Finally, we find that most black-box certification methods only exploit one model in their experiments, which is considered insufficient in some cases. Different model architectures may differ a lot in the certified curve. Hence, we employ several models, including ResNet, WideResNet, VGG, and DenseNet, to support our claim. (See Figure~\ref{fig:different_model} in the Appendix.)

\subsection{Remedy Over-fitting of Regression Model}
In a regression problem, one often has a large amount of features. To prevent over-fitting to the training dataset, feature engineering provides a sophisticated design. Most of them are based on statistical tools to select very few representative data. However, it might also not be able to prevent over-fitting and perform worse with respect to unseen data. Our method provides a post-processing step to smooth the original regression model and preserve its shape. Figure ~\ref{fig:regression} shows an example to verify our claim.  

\begin{figure}[ht]
    \centering
    \includegraphics[width=.33\textwidth]{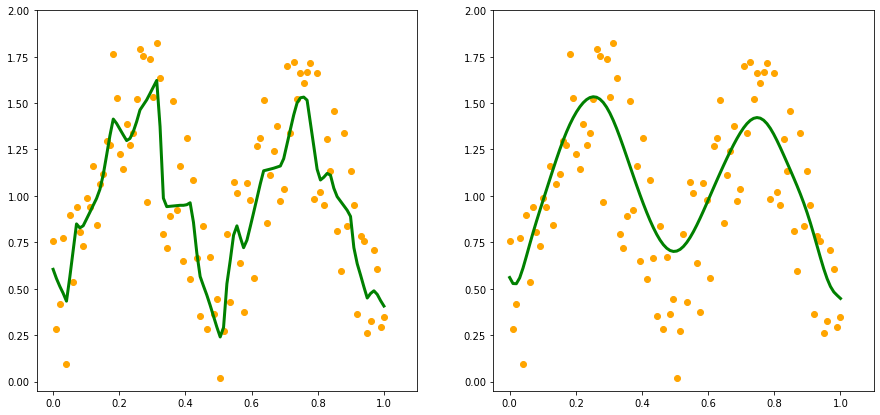}
    \caption{Regression Model. In both graphs, orange dots are noisy training data and green curve denotes the model prediction. {\bf Left:} The original over-fitting regression model. {\bf Right:} After applying our method, the shape is smoothed without over-fitting to outliers.}
    \label{fig:regression}
\end{figure}

\section{Conclusion and Future Work}

We are first to achieve deterministic certification to adversarial attacks via Bernstein polynomial. Moreover, we demonstrate through 2D example visualization, theoretical analyses, and extensive experimentation that our method is a promising research direction for classifier smoothing and robustness certification. Besides smoothing the decision boundary, our method can also be applied to other tasks like over-fitting alleviation. Since the idea ``smoothing" is a general regularization technique, it may be beneficial in various machine learning domains. 

\newpage

\bibliography{main}

\newpage

\onecolumn

\appendix

\section{Appendix of Deterministic Certification to Adversarial Attacks via \\Bernstein Polynomial Approximation}

The appendix is organized as follows. Firstly, we provide a notation table. Secondly, proofs mentioned in the main paper are provided. Thirdly, we offer more experimental results using MNIST, SVHN, and CIFAR-100. Fourthly, we discuss how to solve non-linear systems of equations in detail. Finally, we discuss the certification errors resulted from spectral normalization with skip connections, the singularity of Jacobian matrices, and the conservative constant.

\section{Notations}

\begin{table}[ht]
    \centering
    \begin{tabular}{|p{3.0cm}|p{4.5cm}|}
      \hline
      Notation  &  Definition\\
      \hline
      $I$ & input data\\
      \hline
      $n$ & number of trials\\
      \hline
      $x=[x_1,\cdots,x_d]^T$ & feature vector\\
      \hline
      $x_i$, $i=\{1,2,\cdots,d\}$ & success probability for each trial\\
      \hline
      $f: [0,1]^d \rightarrow \real^K$ & base classifier\\
      \hline
      $\Tilde{f}: [0,1]^d \rightarrow \real^K$ & smoothed classifier\\
      \hline
      $G: \real^{|I|} \rightarrow [0,1]^d$ & feature extractor\\
      \hline
      $y \in \real^K$ & output vector\\
      \hline
      $\Tilde{y} \in \real^K$ & smoothed output vector\\
      \hline
      $B_n(f;x)$ & Bernstein polynomial\\
      \hline
      $C(A;B)$ & A set of continuous functions that maps from A to B\\
      \hline
      $C^\infty(A;B)$ & A set of smooth functions that maps from A to B\\
      \hline
      $[K]=\{1,\cdots,K\}$ & number of classes\\
      \hline
      $\rho: [K] \rightarrow [K]$ & a mapping from ranks to predictions\\
      \hline
      $R$ & certified radius\\
      \hline
      
    \end{tabular}
    \caption{Notations}
    \label{tab:notation}
\end{table}

\section{Proof of Proposition~\ref{prop:1}}

\textbf{Proposition 1. (also stated in the main manuscript)}
\textit{
    Let $\Tilde{f}: [0,1]^d \rightarrow \real^K$ be a smoothed function and let $A=\{\Tilde{y}\in \real^K : \Tilde{y}_i\neq\Tilde{y}_j, \forall i \neq j\}$ denote the area without the smoothed decision boundary. For any $x \in [0,1]^d$ and $\Tilde{f}(x) \in A$, there exists an $R>0$ such that $\argmax_{i\in[K]}\Tilde{f_i}(x+\hat{\delta})= \argmax_{i\in[K]}\Tilde{f_i}(x)$ whenever $\|\hat{\delta}\|_p < R$ \ for \ $p>1$.
}

We can prove a more general case of Proposition~\ref{prop:1} due to the smooth condition is unnecessary as follows:
\begin{lemma}
   Let $f:[0,1]^d \rightarrow \real^K$ be a continuous function, let $A = \{y \in \real^K : y_i \neq y_j \forall i \neq j \} \subset \real^K$, and let $X = \{x \in [0,1]^d : f(x) \in A\} \subset [0,1]^d$. For any $x \in X$, there exists an $R>0$ such that $\argmax_{i\in[K]}f_i(x+\delta)= \argmax_{i\in[K]}f_i(x)$ whenever $\|\delta\|_p < R$ \ for \ $p>1$.
\end{lemma}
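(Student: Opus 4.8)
The plan is to exploit the strict ordering of the coordinates of $f(x)$ together with continuity. The key observation is that membership in $A$ guarantees a \emph{strict} winner separated from the rest by a positive margin, and continuity prevents the runner-up from overtaking it under a small perturbation. This is exactly the "stability of a strict argmax" principle, and smoothness is irrelevant, which is why the statement is phrased for continuous $f$.

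First I would fix $x \in X$. Since $f(x) \in A$, the coordinates $f_1(x),\dots,f_K(x)$ are pairwise distinct, so there is a unique maximizer $i^\star = \argmax_{i\in[K]} f_i(x)$, and the margin
\[
  \gamma := \min_{j \neq i^\star}\bigl(f_{i^\star}(x) - f_j(x)\bigr)
\]
is strictly positive. This positivity is the only place the hypothesis $f(x)\in A$ is used.

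Next I would invoke continuity of $f$ at $x$, understood in the subspace topology of $[0,1]^d$. Because all norms on the finite-dimensional space $\real^K$ are equivalent, it is convenient to control the output deviation in the sup-norm: there exists $R>0$ such that every $\delta$ with $\|\delta\|_p < R$ and $x+\delta \in [0,1]^d$ satisfies $|f_i(x+\delta)-f_i(x)| < \gamma/2$ for all $i\in[K]$. The conclusion then follows from a single chain of inequalities: for such a $\delta$ and any $j\neq i^\star$,
\[
  f_{i^\star}(x+\delta) > f_{i^\star}(x)-\tfrac{\gamma}{2}
    \geq f_j(x)+\gamma-\tfrac{\gamma}{2}
    = f_j(x)+\tfrac{\gamma}{2} > f_j(x+\delta),
\]
so $i^\star$ remains the unique maximizer and $\argmax_{i\in[K]} f_i(x+\delta) = i^\star = \argmax_{i\in[K]} f_i(x)$, as claimed.

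I do not expect a genuine obstacle here; the difficulties are bookkeeping rather than substance. The two points requiring care are, first, restricting $\delta$ so that $x+\delta$ stays in the domain $[0,1]^d$ (handled by working in the relative topology, i.e.\ intersecting the ball of radius $R$ with the domain, which is also why continuity at $x$ in the subspace sense suffices), and second, observing that the choice of norm on the perturbation is immaterial by norm equivalence on a finite-dimensional space. In particular the hypothesis $p>1$ is inherited from the surrounding certification setup and plays no role in this lemma: the same argument yields an $R>0$ for any $p\ge 1$, indeed for any norm on $\real^d$.
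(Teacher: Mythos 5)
Your proof is correct, but it takes a genuinely different route from the paper's. The paper proves the lemma by contradiction with a two-case split: it supposes some $\delta$ with $\|\delta\|_p < R$ makes $\argmax_{i\in[K]}f_i(x+\delta)$ either multi-valued (Case 1, whence $f(x+\delta)\not\in A$) or different from $\argmax_{i\in[K]}f_i(x)$ (Case 2), and in Case 2 it invokes the topological characterization of continuity---preimages of open sets are open---to derive a contradiction. That qualitative argument is loose as written: it asserts $f^{-1}(\ball(f(x),r)) = \ball(x,\delta)$, though the preimage of an open ball is merely an open set and need not be a ball, and Case 1 concludes only that $x+\delta\not\in X$, which is not by itself a contradiction since the lemma never requires the ball around $x$ to stay inside $X$. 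Your direct margin argument---extract the strict gap $\gamma>0$ from $f(x)\in A$, use continuity (measured in the sup-norm on $\real^K$, legitimate by norm equivalence) to force every coordinate to move by less than $\gamma/2$, and finish with the chain $f_{i^\star}(x+\delta) > f_{i^\star}(x) - \gamma/2 \geq f_j(x) + \gamma/2 > f_j(x+\delta)$---is constructive where the paper is indirect: it yields an explicit $R$ from the modulus of continuity of $f$ at $x$ and avoids both soft spots above. Two of your side remarks sharpen the statement itself: the hypothesis $p>1$ is indeed idle (any norm on $\real^d$ works), and your argument in fact needs only that the top coordinate is strictly separated from the others, i.e., $f_{i^\star}(x)\neq f_j(x)$ for $j\neq i^\star$, which is weaker than the full pairwise distinctness encoded in $A$. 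The only bookkeeping point worth making explicit in a final write-up is the one you already flag: $\delta$ must be restricted so that $x+\delta\in[0,1]^d$, since $f$ is undefined outside its domain, and working in the subspace topology handles this.
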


\begin{proof}
  The proof contains two cases. The first case states that $\argmax_{i\in[K]}f_i(x+\delta)$ is a set and the second case states that $\argmax_{i\in[K]}f_i(x+\delta) \neq \argmax_{i\in[K]}f_i(x)$.   
  
  {\bf Case 1 : }
  Suppose that there exists a $\|\delta\|_p < R$, and let $x+\delta = x'$ such that $\argmax_{i\in[K]}f_i(x')$ is a set containing multiple elements. Then $f(x') \not\in A \Rightarrow x' \not\in X$.  
  
  {\bf Case 2 : }
  Suppose that there exists a $\|\delta\|_p < R$, and let $x+\delta = x'$ such that $\argmax_{i\in[K]}f_i(x') \neq \argmax_{i\in[K]}f_i(x)$. Define the distance between $f(x)$ and $f(x')$ by $d(f(x),f(x')) = r$, there is an open ball $\ball(f(x),r)$. By the definition of continuous functions in topological spaces, $i.e.$, for each open set in the codomain, its preimage is an open set in that domain. Thus, $f^{-1}(\ball(f(x),r)) = \ball(x,\delta)$ should be an open subset of $X$. However, there exists some $\delta_1 < \delta$ such that $x+\delta_1 \not\in X$. This indicates that $f$ is not continuous on $X$. This is a contradiction.
  
  By the results of {\bf Case 1} and {\bf Case 2}, the lemma follows.
 
\end{proof}

Hence, our method can also be applied to neural networks directly. However, due to the massive amounts of deep neural network parameters, the result may be inaccurate. Besides, as $n$ grows larger or the feature vector's dimension extends, the certification process will be intractable. Thus, we only solve the problem with small $n$ and small dimensions of the feature vector.

\section{Additional Experiments}

The datasets we use are summarized in Table~\ref{table:datasets}. Among them, MNIST and CIFAR-10 have been popularly used in this area. Since we do not have the certified results of other methods in other datasets, we only conduct experimental comparison between the base classifier and the smoothed classifier to explore the effect of trials empirically. 
\begin{table}[!htbp]
    \centering
          \begin{tabular}{|c|c|c|c|c|c|c|}
          \hline
          \textbf{Dataset} & \textbf{Image size} & \textbf{Training set size}
            & \textbf{Testing set size} & \textbf{Classes}\\
          \hline
          {\bf CIFAR-100} & 32x32x3 & 50K &10K  & 100 \\
          \hline
          {\bf CIFAR-10} & 32x32x3 & 50K &10K  & 10 \\
          \hline
          {\bf SVHN}  & 32x32x3 & 73K &26K  & 10 \\
          \hline
         {\bf MNIST} & 28x28x1 & 60K &10K  & 10 \\
          \hline
        \end{tabular}
      \caption{The evaluation datasets are sorted based on descending order of image size.}
    \label{table:datasets}
\end{table}

\subsection{For CIFAR-10}
As mentioned before, different model architecture may differ a lot in the certified curve. In Figure~\ref{fig:different_model}, we can see that the certified curve varies a lot. For certified radius at 0, the accuracy differs from 89\% (Wide ResNet) to 81\% (VGG19). As for radius at 0.2, the accuracy also varies in a wide range. Although the tendency of certified curves look similar, their gaps may differ by more than 10\%, which is a significant difference.

\begin{figure}[!htbp]
    \centering
    \includegraphics[width=.45\textwidth]{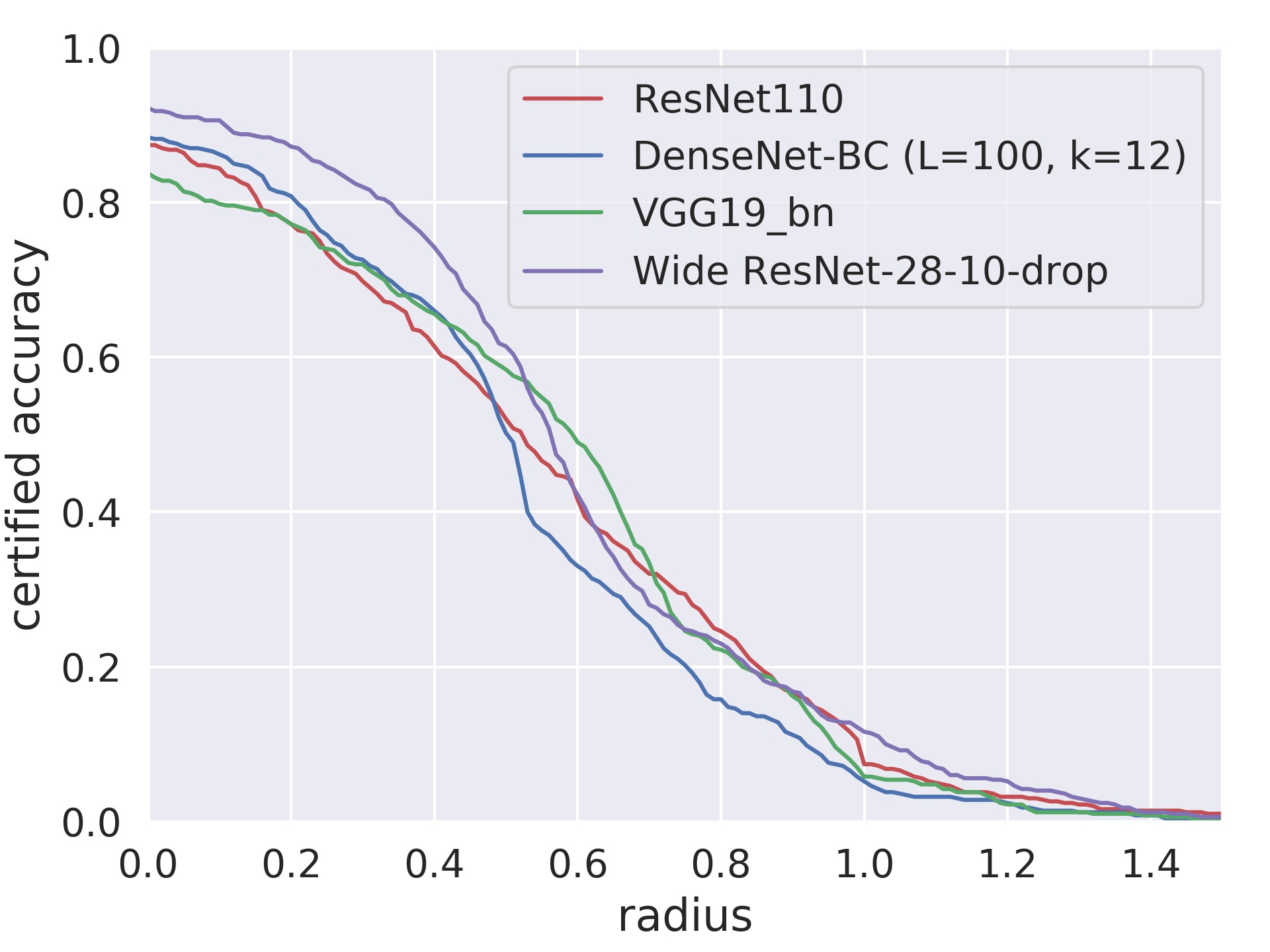}
    \caption{Employing different adversarially trained models to show the certified curve is actually model-dependent.}
    \label{fig:different_model}
\end{figure}

\subsection{For MNIST}
We include MNIST \cite{lecun2010mnist} to demonstrate some of our claims in a simple manner. As we mentioned in Table~\ref{tab:mnist_attack} of the main paper, if we take $n=5$, the natural accuracy 98.51\% and robust accuracy 59.16\% in our method are both larger than the baseline with the natural accuracy 98.50\% and robust accuracy 52.25\%. As we can also see in Figure~\ref{fig:mnist_diff} that there is a trade-off between the number of $n$ and certified accuracy.

\begin{table}[!htbp]
    \centering
        \begin{tabular}{lllll}
            \toprule
            Datasets &   Methods &     Acc &    FGSM &     PGD\\
            \midrule
            \midrule
             \multirow{8}{3em}{MNIST}&
             Baseline &  98.50\% &  61.46\% &  52.25\% \\
             \midrule
             &Ours with $n=1$ &  57.09\% &  43.79\% &  42.63\% \\
             &Ours with $n=2$ &  77.51\% &  48.67\% &  45.99\% \\
             &Ours with $n=3$ &  98.29\% &  64.37\% &  61.33\% \\
             &Ours with $n=4$ &  98.45\% &  64.55\% &  60.67\% \\
             &Ours with $n=5$ &  98.51\% &  63.26\% &  59.16\% \\
             &Ours with $n=6$ &  98.51\% &  62.82\% &  58.06\% \\
             &Ours with $n=7$ &  98.50\% &  61.76\% &  57.03\% \\
            \bottomrule
        \end{tabular}
    \caption{({\bf MNIST}) Empirical result comparison between the base classifier and the smoothed classifier. Note that $\|\epsilon\|_\infty=0.1$ was set for attacks in FGSM and PGD (20 steps). Acc denotes natural accuracy.}
    \label{tab:mnist_attack}
\end{table}

\begin{figure}[!htbp]
    \centering
    \includegraphics[width=.45\textwidth]{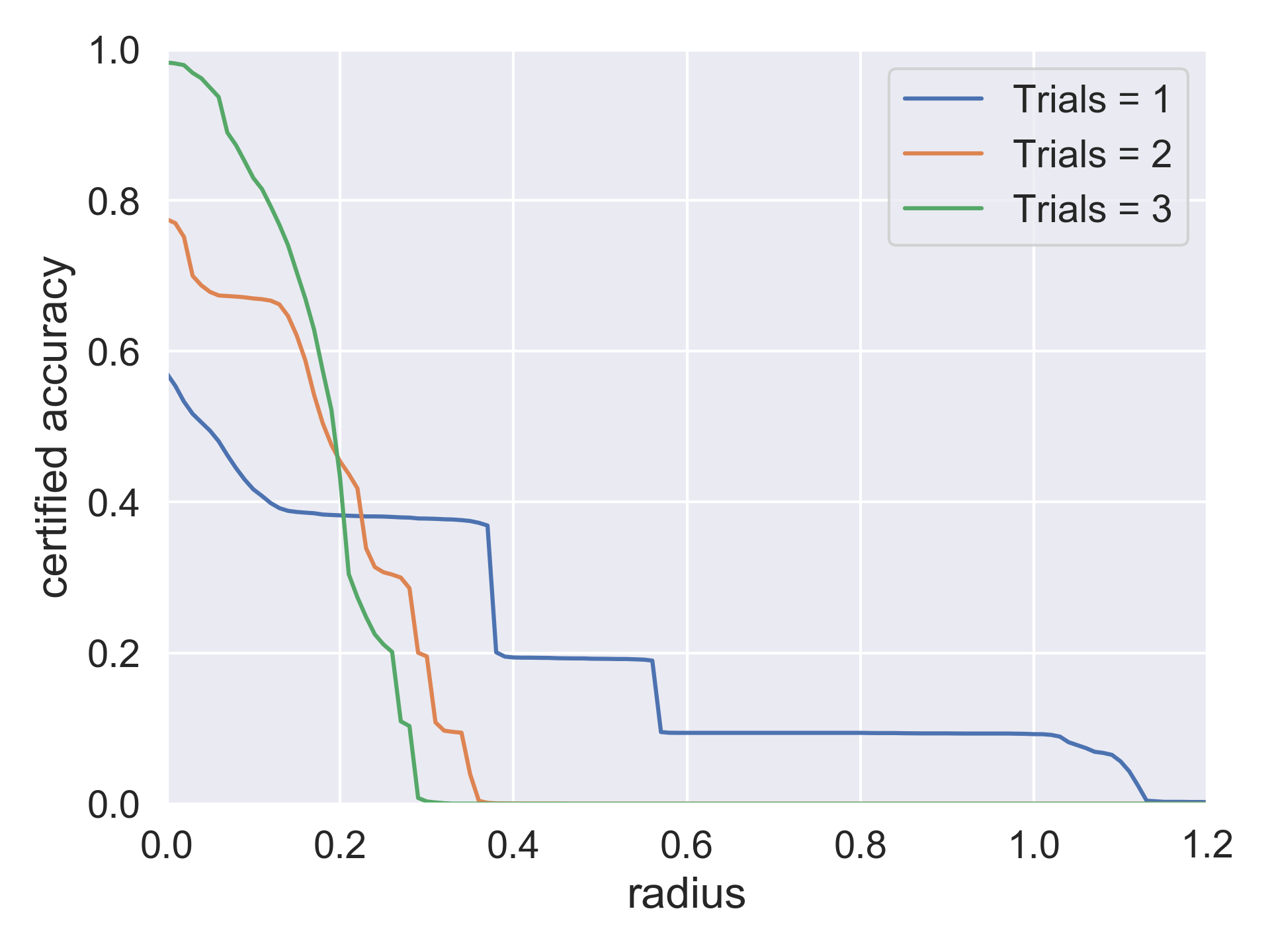}
    \caption{Trade-off between the number of $n$ and certified accuracy (MNIST is taken as an example here. We see similar phenomena in other datasets.) Smaller/larger $n$s exhibit larger/smaller certified radii but suffer from lower/higher natural accuracies.}
    \label{fig:mnist_diff}
\end{figure}


\begin{table}[!htbp]
    \centering
        \begin{tabular}{lllll}
            \toprule
            Datasets &   Methods &     Acc &    FGSM &     PGD \\
            \midrule
            \midrule
             \multirow{8}{3em}{SVHN}&
             Baseline &  95.07\% &  56.85\% &  38.66\% \\
             \midrule
             &Ours with $n=1$ &  78.31\% &  47.80\% &  34.60\% \\
             &Ours with $n=2$ &  94.03\% &  55.80\% &  39.13\% \\
             &Ours with $n=3$ &  94.59\% &  56.63\% &  39.55\% \\
             &Ours with $n=4$ &  94.71\% &  57.00\% &  39.42\% \\
             &Ours with $n=5$ &  94.82\% &  57.28\% &  39.40\% \\
             &Ours with $n=6$ &  94.88\% &  57.39\% &  39.49\% \\
             &Ours with $n=7$ &  94.91\% &  57.54\% &  39.48\% \\
            \bottomrule
        \end{tabular}
    \caption{({\bf SVHN}) Empirical result comparison between the base classifier and the smoothed classifier. Note that $\|\epsilon\|_\infty=0.01$ was set for attacks in FGSM and PGD with 20 steps. Acc denotes natural accuracy.}
    \label{tab:svhn_attack}
\end{table}


\begin{table}[!htbp]
    \centering
        \begin{tabular}{lllll}
            \toprule
            Datasets &   Methods &     Acc &    FGSM &     PGD \\
            \midrule
            \midrule
             \multirow{8}{5em}{CIFAR100}&
             Baseline &  61.6\% & 33.6\% &  38.4\% \\
             \midrule
             &Ours with $n=1$ &  33.8\% &  16.2\% &  17.4\% \\
             &Ours with $n=2$ &  52.8\% &  27.8\% &  31.0\% \\
             &Ours with $n=3$ &  59.0\% &  32.4\% &  36.4\% \\
             &Ours with $n=4$ &  59.4\% &  33.4\% &  38.4\% \\
             &Ours with $n=5$ &  60.6\% &  34.2\% &  38.6\% \\
             &Ours with $n=6$ &  60.0\% &  33.8\% &  38.2\% \\
            \bottomrule
        \end{tabular}
    \caption{({\bf CIFAR-100}) Empirical result comparison between the base classifier and the smoothed classifier. Note that $\|\epsilon\|_\infty=2/255$ was set for attacks in FGSM and PGD with 20 steps. Acc denotes natural accuracy.}
    \label{tab:cifar100_attack}
\end{table}

\begin{figure}[!htbp]
    \centering
    \includegraphics[width=.45\textwidth]{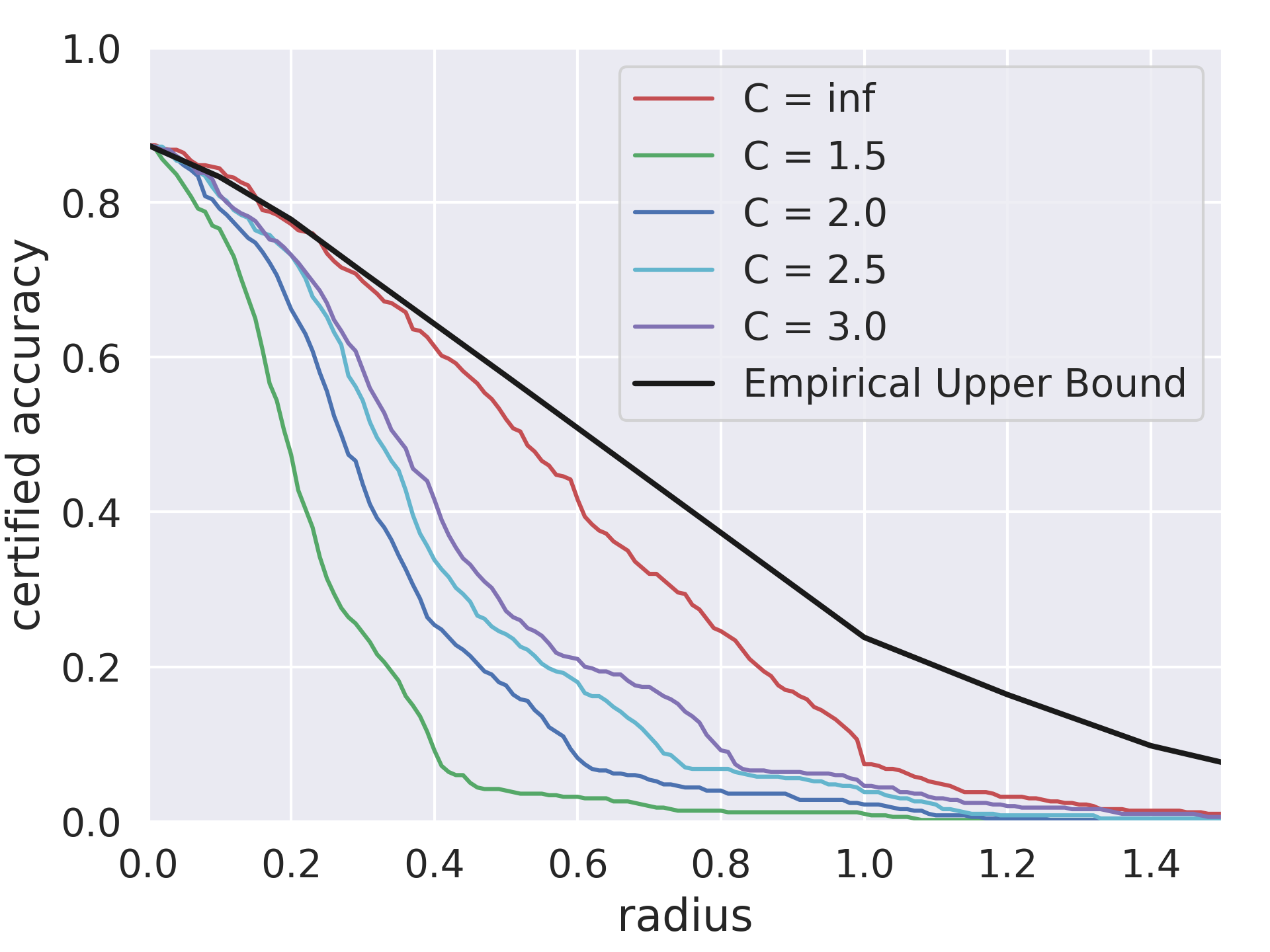}
    \caption{Comparison with different Cs. As C is a conservative parameter, the larger the C is, the closer it is towards the empirical upper bounds.}
    \label{fig:different_c}
\end{figure}

\clearpage
\section{Non-Linear Least Squares Problems and Root-Finding}

In Algorithm~\ref{alg:cert} of the main paper, we adopt a least-square solver. \cite{madsen2004methods} is a great resource for solving nonlinear least squares problem so we suggest readers to find more details in this book. To make this paper self-contained, we describe some important methods from \cite{madsen2004methods} below.  

First we consider a function $f: \real^n \rightarrow \real^m$ with $m \geq n$. Minimizing $\|f(x)\|$ is equivalent to finding 
\begin{align*}
    x^* &= \argmin_{x}\{F(x)\}.\\
    F(x) &= \frac{1}{2}\sum\limits_{i=1}^{m}(f_i(x))^2 = \frac{1}{2}\|f(x)\|^2 = \frac{1}{2}f(x)^Tf(x).
\end{align*}
Let $f \in C^2(\real^n;\real^m)$, and consider Taylor Expansion
\begin{align}
    f(x+h) = f(x) + J(x)h + O(\|h\|^2),
\end{align}
where $J(x) \in \real^{m \times n}$ is the Jacobian matrix, we have $(J(x))_{ij} = \frac{\partial f_i}{\partial x_j}(x)$. Since $F:\real^n \rightarrow \real$, we can denote the first derivative and second derivative of $F$, respectively, in Equations (12) (13) and Equations (14) (15) as:
\begin{align}
    \frac{\partial F}{\partial x_j} &= \sum\limits_{i=1}^mf_i(x)\frac{\partial f_i(x)}{\partial x_j}.\\
    F'(x) &= J(x)^Tf(x). \label{eqn:Fprime}\\
    \frac{\partial^2F}{\partial x_j \partial x_k} &= \sum\limits_{i=1}^m \frac{\partial f_i(x)}{\partial x_j}\frac{\partial f_i(x)}{\partial x_k} + f_i(x)\frac{\partial f_i(x)}{\partial x_j \partial x_k}.\\
    F''(x) &= J(x)^TJ(x) + \sum\limits_{i=1}^m f_i(x)f_i''(x).
\end{align}

\subsection{Gradient Descent (Newton-Raphson's method)}
Let $f(x+h) \approx l(h) := f(x) + J(x)h$, which is the first order Taylor expansion. Suppose that we find an $h$ such that $f(x+h) = 0$. We have the following iteration form:
\begin{align}
    J(x_k)h &= -f(x_k)\\
    x_{k+1} &= x_k + h.
\end{align}
\subsection{Gauss-Newton method}
First, we also consider $f(x+h) \approx l(h) := f(x) + J(x)h$ and $F(x+h) \approx L(h)$, where
\begin{align}
    L(h) &:= \frac{1}{2}l(h)^Tl(h),  \hspace{5ex}(with \ f=f(x) \ and \ J=J(x)) \notag\\
         &= \frac{1}{2} [(f^T+h^TJ^T)(f+Jh)] \notag\\
         &= \frac{1}{2} [f^Tf + f^TJh + h^TJ^Tf + h^TJ^TJh]\notag\\
         &= \frac{1}{2}f^Tf + h^TJ^Tf + \frac{1}{2}h^TJ^TJh\notag\\
         &= F(x) + h^TJ^Tf + \frac{1}{2}h^TJ^TJh.
\end{align}
We can further compute the first and second derivative of $L(h)$, respectively, as
\begin{align}
    L'(h) &= J^Tf + J^TJh \label{eqn:first-order}\\
    L''(h) &= J^TJ. \label{eqn:second-order}
\end{align}
From Equations~(\ref{eqn:Fprime}) and (\ref{eqn:first-order}), we have $L'(0) = F'(x)$. If $J$ has full rank, $L''(h)$ is positive definite. We have the following iteration form:
\begin{align}
    J^T(x_k)J(x_k)h_{gn} &= -J^T(x_k)f(x_k) \label{eqn:GN}\\
    x_{k+1} &= x_k + \alpha h_{gn}.
\end{align}
Note that $h_{gn} = \argmin\limits_h\{L(h)\}$ is called Gauss-Newton step and it is a descent direction since
\begin{align}
    h_{gn}^TF'(x) =  h_{gn}^T(J^Tf) = -h_{gn}^TJ^TJh_{gn} < 0. \label{eqn:steep}
\end{align}
Provided that 
\begin{enumerate}
    \item $\{x | F(x) \leq F(x_0)\}$ is bounded and
    \item the Jacobian $J(x)$ has full rank in all steps,
\end{enumerate}
Gauss-Newton method has a convergence guarantee. To overcome the constraint that the Jacobian should have full rank in all steps, the Levenberg–Marquardt method adds a damping parameter $\mu$ to modify Equation~(\ref{eqn:GN}).

\subsection{The Levenberg–Marquardt (LM) method}
\cite{levenberg1944method} and \cite{marquardt1963algorithm} defined the step $h_{lm}$ by modifying Equation~(\ref{eqn:GN}) as:
\begin{align}\label{eqn:lm_form}
    (J^T(x_k)J(x_k) + \mu I)h_{lm} &= -J^T(x_k)f(x_k).
\end{align}
The above modification has lots of benefits; we list some of them as follows:
\begin{enumerate}
    \item $\forall \mu > 0$, $(J^TJ + \mu I)$ is positive definite which can be proved by the definition of positive definite. This ensures $h_{lm}$ is a descent direction; the proof is similar to Equation~(\ref{eqn:steep}).
    \item For large values of $\mu$, we get $h_{lm} \approx -\frac{1}{\mu}J^Tf = -\frac{1}{\mu}F'(x)$. This is good when $x_k$ is far from the solution.
    \item If $\mu$ is very small, then $h_{lm} \approx h_{gn}$. This is also a good direction if x is close to the solution $x^*$.
\end{enumerate}
We employ {\bf Levenberg–Marquardt method (``LM")} \cite{levenberg1944method} \cite{marquardt1963algorithm} for $\ell_2$ norm certification. The LM method, which is consumed with the advantages of both the {\bf Gauss-Newton} method and {\bf Gradient Descent}, is commonly adopted to solve Equation~(\ref{eqn:nonlinear}). 
\subsection{Trust region method}

Recall from Equation~(\ref{eqn:nonlinear}), we can also add a conservative constant $\xi$ which is similar to the idea of soft margin in the support vector machine. We abbreviate the notation by setting $\beta_i = B_n(f_{i};x)$ and $S(\beta)_i = \exp(\beta_i)/\sum_{j = 1}^{K}\exp(\beta_j)$, where $K$ is the number of classes. Note that $S(\beta)$ is the traditional softmax function which represents the probability vector. Also, recall that $d \leq K$ is the dimension of a feature vector $x_0$.

In the worst case analysis, the runner-up class is the easiest target for attackers. Hence, we assume that the closest point to a feature vector $x_0$ on the decision boundary between the top two predictions of the smoothed classifier $\Tilde{f}(x) = \beta = [\beta_1,\cdots,\beta_K]$ follows $\beta_{\rho(1)} = \beta_{\rho(2)}$, where $\rho$ is the mapping from ranks to predictions. Therefore, the nonlinear systems of equations for characterizing the points on the decision boundary of $\Tilde{f}(x)$ are described as:

\begin{equation}
    \begin{cases}
    \phi_0 := \beta_{\rho(1)} - \beta_{\rho(2)} - \xi = 0\\
    \phi_1 := S(\beta)_{\rho(1)} - \frac{1}{1+\exp(-\xi)} = 0\\
    \phi_2 := S(\beta)_{\rho(2)} - \frac{1}{1+\exp(\xi)} = 0\\
    \phi_3 := S(\beta)_{\rho(3)} = 0\\
    \hspace{8ex}\vdots\\
    \phi_d := S(\beta)_{\rho(d)} = 0\\ 
    \end{cases}
    \label{eqn:nonlinear2}
\end{equation}
where $\xi = \frac{\beta_{\rho(1)} - \beta_{\rho(2)}}{C}$ is a conservative constant and C is a parameter (see Figure~\ref{fig:different_c}). In addition, $\phi_0$ can be seen as the decision boundary between the top two predictions if $C \rightarrow \infty$. By $\phi_0$, we know that $\beta_{\rho(2)} = \beta_{\rho(1)} - \xi$. We can omit $\beta_{\rho(3)},\cdots,\beta_{\rho(K)}$ since they are too small. By employing softmax function, we have $S(\beta)_{\rho(1)} = \frac{\exp(\beta_{\rho(1)})}{\exp(\beta_{\rho(1)}) + \exp(\beta_{\rho(2)})} = \frac{\exp(\beta_{\rho(1)})}{\exp(\beta_{\rho(1)}) + \exp(\beta_{\rho(1)} - \xi)} = \frac{1}{1+\exp(-\xi)}$.
Also, $S(\beta)_{\rho(2)}$ can be computed in the same way as $S(\beta)_{\rho(1)}$ by setting $\beta_{\rho(1)} = \beta_{\rho(2)} + \xi$. We can observe from $S(\beta)_{\rho(1)}$ and $S(\beta)_{\rho(2)}$ that the probability of top two predictions has a controllable gap $\frac{1}{1+\exp(-\xi)} - \frac{1}{1+\exp(\xi)}$. The remaining equations indicate that the probabilities other than $\rho(1)$ and $\rho(2)$ are all zeros. Equation~(\ref{eqn:nonlinear2}) reveals the decision boundary of the smoothed classifier if $\xi$ is zero, otherwise it represents a margin slightly away from the decision boundary. The only thing we need to do is to start from the initial guess (feature vector $x_0$) and find the adversary nearest to it.



We can replace Equation~(\ref{eqn:nonlinear}) with Equation~(\ref{eqn:nonlinear2}) and solve Equation~(\ref{eqn:ls}) by means of the trust region method \cite{nocedal2006numerical}. First we consider the model function $m_k(s_k)$ by first order Taylor expansion of $\Phi(x_k + s_k)$, that is, $F(x_k + s_k) \approx m_k(s_k) = \frac{1}{2}\|\Phi(x_k) + J_ks_k\|_2^2$. To be precise, $m_k(s_k)$ can be written as:
\begin{align}
    m_k(s_k) &= F(x_k) + s_k^TJ_k^T\Phi(x_k) + \frac{1}{2}s_k^TB_ks_k\\
    \mathcal{T}_k &= \{x \in \real^n : \|x - x_k\|_p  \leq \Delta_k \},
\end{align}
where $J_k = \nabla \Phi(x_k)$, $B_k = J_k^TJ_k$ is an approximation of $\nabla^2 \Phi(x_k)$, $\mathcal{T}_k$ is called the trust region, and $\Delta_k$ is the radius of the trust region.

Besides, we need to define a step ratio $r_k$ by 
\begin{align}\label{eqn:ratio}
    r_k &= \frac{F(x_k) - F(x_k+s_k)}{m_k(0) - m_k(s_k)} \notag \\
        &= \frac{\|\Phi(x_k)\|_2^2 - \|\Phi(x_k+s_k)\|_2^2}{\|\Phi(x_k)\|_2^2 - \|\Phi(x_k) + J_ks_k\|_2^2},
\end{align}
where we call the numerator actual reduction and the denominator predicted reduction. $r_k$ is used to determine the acceptance or not of the step $s_k$ and update (expand or contract) $\Delta_k$. Note that the step $s_k$ is obtained by minimizing the model $m_k$ over the region,  including 0. Hence, the predicted reduction is always nonnegative. If $r_k$ is negative, then $F(x_k+s_k)$ is greater than $F(x_k)$ and $s_k$ must be rejected. On the other hand, if $r_k$ is large, it is safe to expand the trust region.

The trust region method \cite{nocedal2006numerical} is depicted in Algorithm~\ref{alg:trust}. For each iteration, the step $s$ is computed by solving an approximate solution of the following subproblem,  
\begin{align}\label{eqn:subproblem}
   \min_{s\in\real^n}m_k(s), \hspace{5pt} \text{subject to} \hspace{2pt} \|s\|_p &\leq \Delta_k,
\end{align}
where $\Delta_k$ is the radius of the trust region.

\begin{algorithm}
\SetAlgoLined
\KwInput{$\hat{\Delta} > 0$, $\Delta_0 \in (0, \hat{\Delta})$ and $\eta \in [0,\frac{1}{4})$}
\KwOutput{Solution $x_k$}
 \For{$k = 0,1,2\cdots$}{
  Get $s_k$ by approximately solving Equation~(\ref{eqn:subproblem})\;
  Evaluate $r_k$ from Equation~(\ref{eqn:ratio})\;
  \eIf{$r_k < \frac{1}{4}$}{
   $\Delta_{k+1} = \frac{1}{4}\Delta_k$\;
   }{
     \eIf{$r_k > \frac{3}{4}$ and $\|s_k\| = \Delta_k$}
     {$\Delta_{k+1} = \min(2\Delta_k,\hat{\Delta})$\;}
     {$\Delta_{k+1} = \Delta_k$\;}
  }
  \eIf{$r_k > \eta$}
  {$x_{k+1} = x_k + s_k$\;}
  {$x_{k+1} = x_k$\;}
 }
 \caption{Trust region method \cite{nocedal2006numerical}}
 \label{alg:trust}
\end{algorithm}
Due to different $\ell_p$ norm, we should change our approach to solve Equation~(\ref{eqn:subproblem}). As mentioned in the main paper, we choose LM to solve Equation~(\ref{eqn:subproblem}) since the trust region is based on $\ell_2$ norm. If we solve Equation~(\ref{eqn:subproblem}) in $\ell_\infty$ norm, we will adopt the dogbox method \cite{voglis2004rectangular}, which is a modification of the traditional dogleg trust region method \cite{nocedal2006numerical}. As for other norms, we can either use the traditional dogleg method with different trust regions or solve the minimization problem by other approaches mentioned in scipy.optimize package.


In the 2D example (see Figure~\ref{fig:compare}), we consider a simpler version of Equation~(\ref{eqn:nonlinear}), which is $\phi_0$,  and F(x) becomes $\frac{1}{2}\phi_0(x)^2$. Recall that our goal is to find a point on the decision boundary which is the closest point to the feature $x_0$. Thus, we define the objective function as $\hat{F}(x) = \frac{1}{2}\phi_0(x)^2 + \|x_0 - x\|^2_2$, where $\|x_0 - x\|^2_2$ is a regularization term, and adopt the trust region method \cite{nocedal2006numerical} to minimize $\hat{F}(x)$. Minimizing $\hat{F}(x)$ is a generalized optimization problem with arbitrary dimensions of $x$. Thus, $-\hat{F}(x)$ can be added to the loss function as a regularization term to fine-tune the model.

The certification procedure (Algorithm~\ref{alg:cert}) is time-consuming as $n$ becomes large. To solve this problem, we can adopt the neural network approach. As \cite{agrawal2019differentiable} stated, the convex optimization problem is unfolded into a neural network architecture. Hence, we can use the fashion of neural networks to accelerate the certification procedure, which is an interesting future work.


\section{The uncertainty of root-finding}
Unlike most model architectures contain skip connections, in spectral normalization, we do not consider skip connections. If the feature extractor $G$ contains a skip connection, the Lipschitz constant should be computed by
\begin{align*}
    \|G(x) + x - G(x') - x'\|_2 &< \|G(x) - G(x')\|_2 + \|x-x'\|_2\\
                              &< 2\|x-x'\|_2.
\end{align*}
If more skip connections are considered, the Lipschitz constant will grow in the power of two. This is a reason why spectral normalization will cause the error. In the training procedure, there might be other ways to control the Lipschitz constant. It is an interesting future work to discuss how to control the Lipschitz constant with skip connections. 

Consider Equation~(\ref{eqn:nonlinear}), we can see that the Jacobian is singular, which is a bad condition for solving minimization problem. In this case, the solution might be inaccurate. However, in practice, we have an acceptable result (see Figure~\ref{fig:Upper_bound}) due to the modification of LM (see Equation~\ref{eqn:lm_form}). A better description of the decision boundary is needed to make the Jacobian non-singular which is also an interesting topic to discuss in the future.

The other uncertainty is due to the conservative parameter $\xi = \frac{\beta_{\rho(1)} - \beta_{\rho(2)}}{C}$ in Equation~(\ref{eqn:nonlinear2}). We can observe from Figure~\ref{fig:different_c} that as $C$ gradually grows, the certified curve will approximate the empirical upper bounds. However, in some scenarios, we can only choose small $C$ to make the certified result conservative enough.

\end{document}